\declaretheorem[numberwithin=section]{lemma}
\declaretheorem{proposition, theorem, corollary, remark, definition, approximation}[
style=plain,
sibling=lemma
]
\newcommand\reals{\mathbb{R}}
\newcommand\CC{\mathbb{C}}
\newcommand\loss{\mathcal{L}}
\newcommand\normal{\mathcal{N}}
\DeclareMathOperator\E{\mathbb{E}}
\let\P\relax
\DeclareMathOperator\P{\mathbb{P}}
\DeclareMathOperator\var{Var}
\DeclareMathOperator\erf{erf}
\newcommand\mask{B}
\newcommand\W{\mathbf{W}}
\DeclareMathOperator\cov{cov}
\DeclareMathOperator\corr{corr}
\DeclareMathOperator\card{card}
\newcommand\bigO{\mathcal{O}}
\newcommand\smallO{
    \mathchoice
    {{\scriptstyle\mathcal{O}}}{{\scriptstyle\mathcal{O}}}{{\scriptscriptstyle\mathcal{O}}}{\scalebox{.1}{$\scriptscriptstyle\mathcal{O}$}}}
\DeclareMathOperator\Tr{Tr}
\let\citep\cite
\crefname{approximation}{Approximation}{Approximations}
\title{On the Stability of the Jacobian Matrix in Deep Neural Networks}
\author[B.\ Dadoun]{Benjamin Dadoun}
\author[S.\ Hayou]{Soufiane Hayou}
\author[H.\ Salam]{Hanan Salam}
\author[M.\ E.\ A.\ Seddik]{Mohamed El Amine Seddik}
\author[P.\ Youssef]{Pierre Youssef}
\begin{document}

    \begin{abstract}
        Deep neural networks are known to suffer from exploding or vanishing gradients as depth increases, a phenomenon closely tied to the spectral behavior of the input-output Jacobian. Prior work has identified critical initialization schemes that ensure Jacobian stability, but these analyses are typically restricted to fully connected networks with i.i.d.\ weights. In this work, we go significantly beyond these limitations: we establish a general universality theorem for Jacobian products that accommodates sparsity (such as that introduced by pruning) and weakly correlated weights. Our results rely on recent advances in random matrix theory, and provide rigorous guarantees for spectral stability in a much broader class of network models. This extends the theoretical foundation for initialization schemes in modern neural networks with structured and dependent randomness.
    \end{abstract}

    \maketitle
    \thispagestyle{empty}

    \section{Introduction}

    Despite their impressive performance across a wide range of applications, training deep neural networks (DNNs) remains a challenging task,  requiring extensive hyperparameter tuning. A key factor that affects the trainability of DNNs is the behavior of the input-output Jacobian, which quantifies the sensitivity of the network's output to perturbations in its input~\citep{saxe2013exact}. Improper initialization can lead to vanishing or exploding Jacobian/gradients as network depth increases, potentially causing the training process to converge to suboptimal local optima or diverge altogether.

    Prior research on the Jacobian in DNNs has primarily examined  its behavior at initialization \citep{pennington2017, collins2023Asymptotic}, where  network weights are typically drawn from random distributions, such as  independent and identically distributed (i.i.d) Gaussians, or random orthogonal matrices~\cite{pennington2017, hanin2020products, chhaibi2022free}. These studies have demonstrated, both theoretically and empirically, a correlation between ``stable'' Jacobian behavior at initialization and improved training and generalization performance. Similar works have investigated   other related quantities at initialization such as input-output covariance kernel and gradient propagation dynamics, to derive the ``Edge of Chaos'' initialization regime \citep{poole2016exponential, samuel2017deep, hayou2019impact, hayou2022curse}, which guarantees gradient stability and improved information propagation through depth.

    However, these analyses are limited to the setting of random, uncorrelated initializations and do not readily extend to more structured or correlated weight distributions. Yet in many practical scenarios, such as data-dependent initialization~\cite{krahenbuhl2015data,chumachenko2022feedforward}, meta-learning~\cite{finn2017model}, or fine-tuning from pretrained weights~\cite{li2018measuring}, networks are initialized with non-i.i.d., often correlated weights~\cite{mishkin2015all}. Recent theoretical and empirical work has shown that even mild correlations in the weight matrices can significantly alter signal and gradient propagation dynamics, potentially improving convergence and generalization~\cite{jin2020does}. Still, the behavior of the Jacobian in the presence of correlated weights remains poorly understood, particularly in the large-depth regime where signal propagation is most sensitive. Understanding how weight correlations affect Jacobian stability is therefore crucial for explaining the training dynamics of deep networks in realistic settings.

    Similarly, another underexplored case is that of sparse networks. Pruning is a common technique that sparsifies neural networks to reduce their memory and computational footprint~\citep{lecun_pruning, hassibi_pruning, lee_snip2018, wangGRASP, franklelottery2019}. While score-based pruning methods such as SNIP~\citep{lee_snip2018}, GraSP~\citep{wangGRASP} and their variants~\citep{hayou2021robust}, aim to preserve important connectivity at initialization, the effect of pruning on the input-output Jacobian remains poorly understood. Existing works suggest that unstructured sparsity can disrupt gradient flow~\cite{evci2022gradient}, but lack theoretical guarantees on Jacobian stability. This gap motivates a closer analysis of Jacobian behavior in pruned networks and other non-standard settings.

    In this work, we take a step in the direction of understanding these regimes and focus on two scenarios:
    \begin{itemize}
        \item \emph{Correlated weights}: we investigate the Jacobian behavior in the fixed-depth infinite-width regime with correlated weights, and study the impact of depth on training stability in this case. Our goal here is to obtain an upperbound on weight correlation level that guarantees depth stability of the Jacobian.
        \item \emph{Sparse networks}: we study the Jacobian of pruned (sparse) networks, which can provide insights on how to adjust training algorithms to maintain stability after \emph{pruning}. We consider Random Pruning and  Magnitude-based pruning for tractability of the analysis. Our goal here is to show that different pruning algorithms require different treatments to preserve depth stability of the Jacobian.
    \end{itemize}

    We address these questions by developing a general universality theorem for Jacobian products (\cref{th:jacobian-universality}) based on recent breakthroughs in random matrix theory \citep{brailovskaya2024}. Our primary objective is to \emph{gain insights} into the behavior of the Jacobian under the different scenarios stated above, in the case of Multi-Layer Perceptrons (MLPs).

 Our analysis reveals a universality phenomenon for Jacobian products. In
particular, we show that suitably rescaled sparse networks and weakly correlated
Gaussian networks have the same fixed-depth large-width Jacobian norm as the
critically initialized iid Gaussian model. It is worth noting that the \emph{scaling factor depends on the pruning method} and that there exists a width-dependent correlation threshold that guarantees similar Jacobian properties to the i.i.d.\ case.\\

    Throughout this paper, we consider the Multi-Layer Perceptron (MLP) architecture given by
    \begin{equation}\label{eq:mlp_network}
        \texttt{MLP} \triangleright
        \begin{cases}
            Y_0(x) = W_{\text{in}}\,x, \\
            Y_k(x)
            = W_k\,\phi(Y_{k-1}(x)), \quad k=1,\dots, L, \\
            Y_{\text{out}}(x) = W_{\text{out}}\, \phi(Y_L(x)),
        \end{cases}
    \end{equation}
    where $x \in \reals^d$ is the input, $Y_{\text{out}} \in \reals^o$ is the network output, $W_{\text{in}} \in \reals^{n \times d}, W_k \in \reals^{n \times n}, W_{\text{out}} \in \reals^{o \times n}$ are the network weights, and~$\phi$ is the ReLU activation function given by $\phi(z) = \max(z, 0)$ (acting coordinate-wise). For the sake of simplification, we omit here the bias terms in the definition of the MLP. We refer to~$Y_k$ as the pre-activations or the features and $\phi(Y_k)$ as the activations. Hereafter, \emph{width} and \emph{depth} will be used to refer to~$n$ and~$L$, respectively.

    In practice, neural networks are usually trained with gradient-based algorithms such as Stochastic Gradient Descent (SGD), Adam, etc.~\citep{lecun2015deep,kingma2014adam,bottou2012stochastic}. This requires the calculation of the gradients of some loss function with respect to the weights $(W_k)_{1 \leq k \leq L}$ using back-propagation. Let~$\ell$ be a loss function (e.g.\ mean-squared error for regression, and cross-entropy loss for classification) and $\mathcal{D} = \{(x_i, z_i), i=1 \dots N\}$ be a fixed training dataset. DNN training aims to minimize the empirical objective
    $
    \loss(\mathbf{W}) = N^{-1} \sum_{i=1}^N \ell(Y_{\text{out}}(x_i), z_i),
    $
    where $\mathbf{W} = \{W_{\text{in}}, (W_k)_{1\leq k \leq L}, W_{\text{out}}\}$. With GD, the parameters $\mathbf{W}$ are updated with the rule:
    $$
    \mathbf{W} \leftarrow \mathbf{W} - \eta \frac{\partial \loss}{\partial \mathbf{W}}.
    $$
    For a datapoint~$(x,z)$, the gradient of the loss function evaluated at~$(x,z)$ w.r.t.\ the weights $W_{k}^{i,j}$ (for some $i,j \in \{1, \dots,n\}$) is given by:
    $$
    \frac{\partial \ell(Y_{\text{out}}(x), z)}{\partial W_k^{i,j}} = \frac{\partial \ell(Y_{\text{out}}(x), z)}{\partial Y_k^{i}(x)} \, \phi(Y_{k-1}^j(x)) = \frac{\partial \ell(Y_{\text{out}}(x), z)}{\partial Y_L(x)}^\top \frac{\partial Y_{L}(x)}{\partial Y_k^{i}(x)} \, \phi(Y_{k-1}^j(x)).
    $$
    Hence, the gradients inherently depend on the Jacobian terms $$J_k(x) = \frac{\partial Y_L(x)}{\partial Y_{k-1}(x)} = \left(\frac{\partial Y_L^i(x)}{\partial Y_{k-1}^j(x)}\right)_{1\leq i,j\leq n} \in \reals^{n \times n},$$
    for $k \in \{1,2,\dots, L\}$. Using the chain rule, it is easy to see that for $k \in \{1, \dots, L-1\}$, the Jacobian satisfies the recursion $
    J_k(x) = J_{k+1}(x) \times  W_{k}\, D_{k-1}(x)
    $,
    where $D_{k}(x) = \textup{Diag}(\phi'(Y_{k}(x))) \in \reals^{n \times n}$. Thus, we can express the Jacobian terms as a matrix product:
    \begin{equation}
        J_k(x) = \prod_{l=k}^L  W_l D_{l-1}(x), \quad k\in \{1, \dots, L-1\},\label{eq:jacobian-as-product}
    \end{equation}
    Hereafter, to alleviate the notation, we denote the Jacobian without reference to its input~$x$ and use $J_1 = \prod_{l=1}^L W_l D_{l-1}$, as defined earlier. We assume a fixed non-zero input~$x$ for the analysis. Nevertheless, as we show in our empirical results, our findings hold for randomly selected inputs from the dataset, demonstrating that our conclusions are independent of the input choice\footnote{It should be noted that while the input can impact stability in a DNN, this impact is often negligible if the input is normalized, with the architecture and weight distribution playing a more significant role.}.

    Due to the nature of the Jacobian (product of matrices), one can anticipate the occurrence of vanishing or exploding gradient phenomena as depth grows in instances where, e.g. the weights are improperly initialized. By examining the spectral norm of the Jacobian, denoted by~$\|J_1\|$ (i.e., the largest singular value of~$J_1$), as a function of the depth~$L$, distinct regimes can be identified wherein the Jacobian norm exhibits either exponential exploding or vanishing, or alternatively demonstrates a sub-exponential dependence relative to depth~\cite{pennington2017}. It is the exponential dependence on the depth that poses a practical problem, as it typically leads to fast degradation of the gradients (exponential vanishing) or numerical instability (exponential exploding). Hereafter, when the depth dependence is sub-exponential, the network is said to be \emph{stable}. For any varying quantities $a,b$, we will use the standard notation $b = \Theta(a), b = \mathcal{O}(a), b = \smallO(a)$, to mean respectively $\alpha a \leq b \leq \beta a$, $|b| \leq \beta |a|$ (where $\alpha, \beta >0$ are constants), and $\lim b/a = 0$. The notation $\Tilde{\Theta}$ and $\Tilde{\mathcal{O}}$ are used to suppress sub-exponential factors. Additionally, we write~$a\lesssim b$ to mean $a\leq \beta b$. When $a,b$ are random, the underlying constants may depend on the randomness. To stress that these constants may also depend on an extra parameter, typically the depth~$L$, we also write
    $\Theta_L,\tilde\Theta_L$, etc.

    \begin{definition}[Stable Jacobian]\label{def:stability}
        We say that the Jacobian of a network with a distribution~$q$ over the weights $\mathbf{W} \sim q$ is stable if:
        $$
        \E_{\mathbf{W} \sim q} \|J_1\|  = \tilde{\Theta}_L(1),
        $$
        where~$\|\cdot\|$ denotes the spectral norm, $\|A\| := \sqrt{\lambda_{\max}(AA^*)}$.
    \end{definition}
    In the fixed-depth infinite-width regime considered throughout this paper, the
Jacobian norm converges in probability to a deterministic limit under the
assumptions of our main results. For this reason, we shall formulate our universality results in probability, while continuing to use the above definition as the conceptual notion of stability. Here, we define stability for any weight distribution~$q$. Note that stability is defined as $\tilde{\Theta}(1)$ instead of~$\Theta(1)$, which hides sub-exponential terms. This is because empirical results suggest that for typical network depths (e.g., in the range of 10 to 100), sub-exponential dependence does not significantly affect the performance \citep{poole2016exponential, samuel2017deep, hayou2019impact}. In the following, we will analyze the fixed-depth infinite-width limit of the Jacobian.

    \section{Jacobian with i.i.d Weights at Initialization}\label{sec:jacobian_iid}
    Using some approximations,~\cite{pennington2017} showed that in the large width limit, when neural networks are initialized with independent Gaussian weights $\normal(0,\zeta)$, the largest singular value of the input-output Jacobian~$J_1$ can either exponentially explode or vanish with depth if the variance of the weights is different from $\zeta = 2/n$, which is also known as \emph{the Edge of Chaos} initialization. This choice of~$\zeta$ guarantees stability at initialization. This result holds under the following approximation.

    \begin{approximation}\label{approx:i.i.d_bernoulli}
        In the infinite-width limit, the diagonal entries of $(D_k)_{k \in 0, \dots, L}$ behave as i.i.d Bernoulli  variables with parameter~$1/2$. Moreover, they are independent of the weights~$\W$.
    \end{approximation}
    It is easy to see why \cref{approx:i.i.d_bernoulli} is a valid approximation in the large-width regime. When $n \to \infty$, it is well known that the entries of the pre-activations $(Y_k^i)_{1 \leq i \leq n}$ converge (in distribution) to i.i.d Gaussian random variables that are independent across~$i$ and~$k$ \citep{neal, lee_gaussian_process2018, matthews2018gaussian, hayou2019impact, yang2019tensor_i}. Moreover, these entries become independent of the weights~$\W$ in this limit. Hence, since the matrix~$D_k$ consists of diagonal elements of the form $\phi'(Y_k^i) = 1_{Y_k^i > 0}$, it holds that the diagonal elements become approximately i.i.d Bernoulli random variables with parameter~$1/2$ when~$n$ is large. We refer the reader to \cref{app:further_empirical_results} for an empirical verification of \cref{approx:i.i.d_bernoulli}.

    We state the following result using the approximate symbol "$\approx$" instead of "$=$" since the authors proved it under an approximation.
    \begin{figure}
        \begin{center}
            \includegraphics[width=0.27\linewidth]{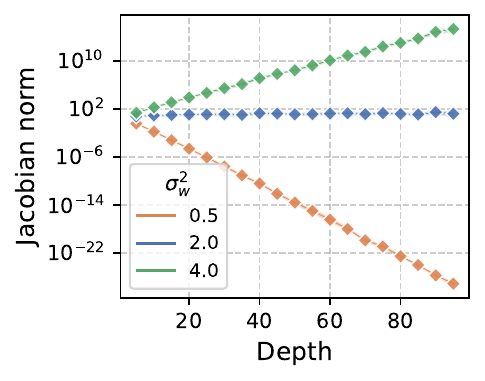}
            \includegraphics[width=0.3\linewidth]{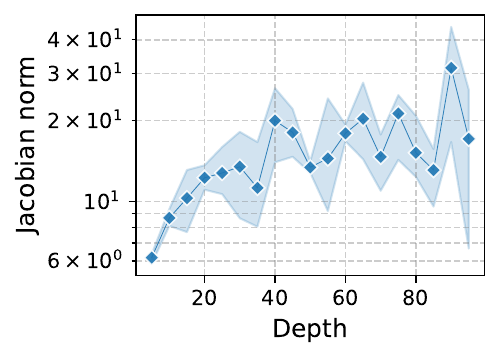}
        \end{center}
        \caption{\small{Illustration of the Jacobian norm at initialization in an MLP network of width $n=256$ and varying depth. The input is randomly selected from MNIST. All results are averaged over 3 runs. \textbf{(Left)} Impact of depth on the Jacobian norm for different~$\sigma_w$. \textbf{(Right)} Evolution of the Jacobian norm as a function of depth for critical initialization.}}\label{fig:jacnorm_init_i.i.d}
    \end{figure}
    \begin{theorem}[Corollary of Eq. (17) in~\cite{pennington2017}]\label{thm:criticality_i.i.d}
        Assume that the weights are i.i.d initialized as $W_k^{ij} \sim \normal(0,\sigma_w^2/n)$ for some $\sigma_w >0$. Then, in the limit $n \to \infty$, under \cref{approx:i.i.d_bernoulli}, we have the following:
        $$
        \|J_1\| \approx \Theta_L\left(L\, \left( \frac{\sigma_w^2}{2}\right)^L \right).
        $$
        As a result, the choice $\sigma_w^2 = 2$ guarantees stability.
    \end{theorem}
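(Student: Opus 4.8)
The plan is to pass to the $n\to\infty$ limit and reduce the statement to locating the right edge of the limiting spectrum of $G := J_1 J_1^*$, since $\|J_1\|^2 = \lambda_{\max}(G)$ and the stability criterion depends only on the exponential growth rate of this edge in $L$. Under \cref{approx:i.i.d_bernoulli} the diagonal matrices $(D_l)$ have i.i.d.\ $\{0,1\}$ entries, are independent across $l$, and are independent of the Gaussian weights $(W_l)$; the standard asymptotic-freeness results for independent Gaussian matrices (mutually free, and free from independent diagonal matrices) then let me work in a free-probability model in which $\{W_l\}\cup\{D_l\}$ is a free family. This is exactly the regime of \cite{pennington2017}, so the theorem amounts to specializing their computation to ReLU and extracting the $L\to\infty$ asymptotics.

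Writing $J_1 = M_L\cdots M_1$ with $M_l := W_l D_{l-1}$, I would use multiplicativity of the $S$-transform over products of free elements, $S_G = \prod_{l=1}^L S_{M_l M_l^*}$. Each factor is built from its two ingredients: $W_lW_l^*$ follows the square Marchenko--Pastur law of scale $\sigma_w^2$, with $S_{W_lW_l^*}(z)=\bigl(\sigma_w^2(1+z)\bigr)^{-1}$, while $D_{l-1}$ has the Bernoulli$(1/2)$ spectral law $\tfrac12\delta_0+\tfrac12\delta_1$, with $S_{D_{l-1}}(z)=2(1+z)/(1+2z)$. Using $D_{l-1}^2=D_{l-1}$ and freeness of $W_l$ from $D_{l-1}$, a second application of multiplicativity gives $S_{M_lM_l^*}(z)=2/\bigl(\sigma_w^2(1+2z)\bigr)$, hence
\[
S_G(z) = \left(\frac{2}{\sigma_w^2}\right)^{L}(1+2z)^{-L}.
\]

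The edge is then extracted by the standard support-edge criterion. Setting $\chi(z)=\tfrac{z}{1+z}S_G(z)$ (the functional inverse of the moment generating function), the right edge is $\lambda_{\max}(G)=1/\chi(z_c)$, where $z_c$ solves $\chi'(z_c)=0$. Taking a logarithmic derivative reduces this to the quadratic $2Lz^2+(2L-2)z-1=0$, whose relevant root is $z_c=\bigl(-(L-1)+\sqrt{L^2+1}\bigr)/(2L)=\tfrac{1}{2L}+\bigO(L^{-2})$. Substituting and using $(1+2z_c)^L\to e$ gives $\lambda_{\max}(G)=\Theta\!\bigl(L\,(\sigma_w^2/2)^L\bigr)$, i.e.\ $\|J_1\|^2$ grows like $L(\sigma_w^2/2)^L$ (equivalently $\|J_1\|$ like $\sqrt L\,(\sigma_w^2/2)^{L/2}$). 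In either normalization the exponential base is $\sigma_w^2/2$, so the choice $\sigma_w^2=2$ cancels the exponential factor and leaves only a polynomial dependence on $L$, which is the asserted stability; any $\sigma_w^2\neq 2$ produces exponential growth or decay. (As a sanity check, the same recipe applied to $S_{W_lW_l^*}$ alone returns $z_c=1$ and $\lambda_{\max}=4\sigma_w^2$, the known right edge of the Marchenko--Pastur law.)

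The main obstacle is not the algebra but the passage from the \emph{limiting spectral distribution} to the \emph{extreme} singular value: the $S$-transform controls the bulk of $G$, whereas $\|J_1\|$ is its largest singular value, so one must rule out outliers and establish convergence of the top eigenvalue to the right edge of the limiting support. For products of random matrices this edge/no-outlier control is delicate, and it is precisely here that \cite{pennington2017} remain at the level of an approximation (hence the ``$\approx$''); a fully rigorous version would require edge universality or direct norm-of-product estimates rather than $S$-transform identities alone. A secondary point to check is the justification of asymptotic freeness and $S$-transform multiplicativity in the joint presence of the Gaussian $W_l$ and the data-dependent $D_l$ under \cref{approx:i.i.d_bernoulli}; once freeness is granted, the remaining computation is mechanical.
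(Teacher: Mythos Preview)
The paper does not supply its own proof of this statement: the theorem is quoted as a corollary of Eq.~(17) in \cite{pennington2017}, and the surrounding text explicitly adopts the symbol ``$\approx$'' \emph{because} the original derivation is carried out under an approximation. So there is no in-paper argument to compare your proposal against.

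That said, what you have written is a faithful reconstruction of the Pennington--Schoenholz--Ganguli $S$-transform computation, and the algebra is correct: the per-layer $S$-transform $S_{M_lM_l^*}(z)=2\sigma_w^{-2}(1+2z)^{-1}$, the product formula $S_G(z)=(2/\sigma_w^2)^L(1+2z)^{-L}$, the edge equation $2Lz^2+(2L-2)z-1=0$, the root $z_c\sim(2L)^{-1}$, and the limit $(1+2z_c)^L\to e$ are all right. One small bookkeeping point: your computation gives $\lambda_{\max}(G)=\|J_1\|^2=\Theta\bigl(L(\sigma_w^2/2)^L\bigr)$, whereas the theorem as stated writes this scaling for $\|J_1\|$ itself; you already flag this, and you are correct that it is immaterial for the stability conclusion since $\tilde\Theta$ absorbs the difference between $L$ and $\sqrt L$ and the exponential base is unchanged.

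You also put your finger on exactly the right soft spot: the $S$-transform identifies the right edge of the \emph{limiting} spectral measure of $G$, and upgrading this to a statement about $\|J_1\|$ requires a no-outlier / edge-convergence argument that neither \cite{pennington2017} nor the present paper supplies for this particular theorem. This is precisely why the paper keeps the ``$\approx$'' here and then develops \cref{thm:stability_theorem} (via \cite{brailovskaya2024}) to obtain genuine operator-norm control in the cases it actually needs.
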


    The choice $\sigma_w^2 = 2$ corresponds to the Edge of Chaos initialization; an initialization scheme that allows deeper signal propagation in MLPs~\cite{poole2016exponential, samuel2017deep, hayou2019impact}. \cref{fig:jacnorm_init_i.i.d} illustrates the Jacobian norm for different choices of~$\sigma_w$ and depths~$L$. Exponential exploding/vanishing with depth can be observed in the non-critical initialization cases  $\sigma_w^2 \in \{0.5, 4.0\}$, while a sub-exponential growth w.r.t.\@ depth is achieved with the critical initialization $\sigma_w^2=2$ as predicted by \cref{thm:criticality_i.i.d}. In \cref{app:further_empirical_results}, we report the accuracy of trained networks for varying initialization schemes (resulting in different Jacobian norms) and further confirm that Jacobian stability is necessary to achieve non-trivial performance. This adds to the empirical evidence provided in~\cite{pennington2017, chhaibi2022free}.

    Our goal in this paper is to establish similar stability results in the two different scenarios previously stated (sparse networks and correlated weights). Hereafter, $J_1^{iid}$ denotes the Jacobian of an iid randomly initialized MLP with $\sigma_W^2=2$.
    In the next section, we introduce the core technical result in this paper, a general universality theorem for Jacobian products, based on recent results from random matrix theory.

\section{Universality of Jacobian products}
In this section, we prove a universality theorem asserting that a broad class of random matrices interleaved with Bernoulli gates have the same fixed-depth large-width Jacobian norm as the critically initialized iid Gaussian model. 
Throughout this section, $L\ge 1$ is fixed. Let
$D_0,\ldots,D_{L-1}$ be random diagonal matrices, independent of all weight matrices, and with entries i.i.d Bernoulli with parameter $1/2$. 
Let $G_1,\ldots,G_L$ be independent $n\times n$ Gaussian matrices with iid
entries
$$
    (G_l)_{ij}\sim \mathcal N(0,2/n),
$$
also independent of the $D_l$'s. 
For a collection of $n\times n$ random matrices $X_1,\ldots,X_L$, define
$$
    J_X:=X_LD_{L-1}X_{L-1}D_{L-2}\cdots X_1D_0,
$$
and
$$
    J_G:=G_LD_{L-1}G_{L-1}D_{L-2}\cdots G_1D_0.
$$

\begin{theorem}[Fixed depth universality]\label{th:jacobian-universality}
Assume that $L\ge 1$ is fixed. Let $X_1,\ldots,X_L$ be independent
$n\times n$ random matrices, independent of $D_0,\ldots,D_{L-1}$, whose
entries are centered and of variance $2/n$. Assume that one of the following two conditions holds. 
\begin{enumerate}
\item[(A)] The entries of each $X_l$ are independent, and
$$
r_n:= \left(
    \mathbb E
    \max_{\substack{1\le l\le L\\1\le i,j\le n}}
    |X_{l,ij}|^2
    \right)^{1/2}
$$
satisfies
$$
r_n\log^4 n\to0.
$$
\item[(B)] The matrices $X_l$ are centered Gaussian matrices and
$$
\rho_n:=
\max_{l}
\max_{(i,j)\ne(k,l)}
\left|
\operatorname{Cov}(X_{l,ij},X_{l,kl})
\right|
$$
satisfies
$$
n^{5/2}\rho_n\log n\to0.
$$
\end{enumerate}
Then
$$
\frac{\|J_X\|}{\|J_G\|}
    \xrightarrow[n\to\infty]{\mathbb P}1.
$$
\end{theorem}

To prove Theorem~\ref{th:jacobian-universality}, we use a layer-by-layer replacement argument. The key step is a one-layer comparison: after conditioning on all other layers and on the diagonal gates, the problem reduces to comparing
$$
    \|UXV\|
    \qquad\text{and}\qquad
    \|UGV\|,
$$
where $U,V$ are deterministic matrices, $X$ is the layer to be replaced, and
$G$ is the corresponding iid Gaussian layer. The next two lemmas provide this comparison in the two cases appearing in Theorem~\ref{th:jacobian-universality}. The first one applies to independent entries, while the second applies to weakly correlated Gaussian entries. Both lemmas makes use of recent breakthrough results of \citep{brailovskaya2024}. 
\begin{lemma}[One-layer universality for independent entries]\label{lem:one-layer}
    Let $U,V$ be $n\times n$ deterministic matrices. Let $X$ be an $n\times n$ random matrix whose
entries are independent, centered, and of variance $2/n$. Let $G$ be an $n\times n$ random matrix whose
entries are independent, centered Gaussian variables of variance $2/n$. Moreover let 
$$
r_n:= \left(
    \mathbb E
    \max_{i,j\le n}
    |X_{ij}|^2
    \right)^{1/2}.
$$
Assume $r_n\le 1$. Then  
$$
\mathbb P\Bigg( \big|\|UXV\|-\|UGV\|\big|\ge C\|U\|\|V\|
    \left[\sqrt{\frac{\log n}{n}}+r_n^{1/6}(\log n)^{2/3}+r_n^{1/2}\log n\right]\Bigg)
\le Cn^{-3}+Cr_n,
$$
where $C>0$ is a universal constant.
\end{lemma}
\begin{proof}
We use Hermitian dilation to reduce the non-self-adjoint matrix
$UXV$ to a self-adjoint one. Define
$$
H_X   :=\begin{pmatrix}
        0 & UXV\\
        V^*X^*U^* & 0
    \end{pmatrix},
    \qquad
H_G :=    \begin{pmatrix}
        0 & UGV\\
        V^*G^*U^* & 0
    \end{pmatrix}.
$$
Then
$$
\|H_X\|=\|UXV\|,\qquad \|H_G\|=\|UGV\|.
$$
Moreover,
$$
    H_X=\sum_{i,j=1}^n Z_{ij},
$$
where
$$  Z_{ij}  :=X_{ij}
    \begin{pmatrix}
        0 & Ue_i e_j^*V\\
        V^*e_j e_i^*U^* & 0
    \end{pmatrix}.
$$
The matrices $Z_{ij}$ are independent, centered, and self-adjoint. We now apply Theorem 2.8 of \cite{brailovskaya2024}. To do so, we note that $H_X$ is a sum of independent centered self-adjoint matrices, while $H_G$ is the Gaussian matrix with the same mean and covariance structure. We estimate the relevant parameters. We first note that 
$$
\|Z_{ij}\|
    \le
    |X_{ij}|\,\|U\|\,\|V\|.
$$
Therefore
\begin{equation}\label{eq: bound-Rbar}
    \bar R(H_X):=
    \left(
    \mathbb E\max_{i,j}\|Z_{ij}\|^2
    \right)^{1/2}
    \le
    \|U\|\|V\|\,r_n.
\end{equation}
Now note that 
$$ H_X^2 =    \begin{pmatrix}
        UXVV^*X^*U^* & 0\\
        0 & V^*X^*U^*UXV
    \end{pmatrix}.
$$
Since the entries of $X$ are independent, centered, and of variance $2/n$, we get
$$
\mathbb E[XVV^*X^*]=\frac{2}{n}\operatorname{Tr}(VV^*)I_n=
    \frac{2}{n}\|V\|_{\mathrm{HS}}^2I_n,
$$
and similarly
$$
\mathbb E[X^*U^*UX]=\frac{2}{n}\|U\|_{\mathrm{HS}}^2I_n.
$$
Thus
$$
\mathbb E H_X^2=
    \begin{pmatrix}
        \frac{2}{n}\|V\|_{\mathrm{HS}}^2UU^* & 0\\
        0 & \frac{2}{n}\|U\|_{\mathrm{HS}}^2V^*V
    \end{pmatrix}.
$$
Since $\|U\|_{\mathrm{HS}}\le \sqrt n\|U\|$ and
$\|V\|_{\mathrm{HS}}\le \sqrt n\|V\|$, it follows that
 \begin{equation}\label{eq:bound-sigma}
     \sigma(H_X):= \|\mathbb E H_X^2\|^{1/2}\le \sqrt2\,\|U\|\|V\|.
 \end{equation}  
Similarly, we have 
$$
\mathbb E[\langle x, UXV y\rangle^2] = \frac2n \|U^*x\|^2\|Vy\|^2 \leq \frac{2}{n} \|U\|^2\|V\|^2,
$$
for any unit vectors $x,y\in \mathbb{R}^n$. Therefore 
$$
\mathbb E[\langle x, H_X y\rangle^2]\leq \frac{8}{n} \|U\|^2\|V\|^2,
$$
for any unit vectors $x,y\in \mathbb{R}^{2n}$. 
Hence
$$
\sigma_*(H_X) :=
    \sup_{\|x\|=\|y\|=1}
    \left(
    \mathbb E|\langle x,H_Xy\rangle|^2
    \right)^{1/2}
    \le
    2\sqrt{2}\frac{\|U\|\|V\|}{\sqrt n}.
$$
Now set $R:= C_0\|U\|\|V\| \sqrt{r_n}$, for some sufficiently large universal constant $C_0$. Since $r_n\le 1$, then in view of \eqref{eq: bound-Rbar}, we have $ \bar R(H_X)\le R$. Together with \eqref{eq:bound-sigma} and the fact that $C_0$ is large enough, we have 
$$
R\ge \sqrt{\bar  R(H_X)  \sigma(H_X)} +
\sqrt 2\,\bar R(H_X).
$$
Therefore, we can apply \cite[Theorem 2.8]{brailovskaya2024} with $t=4\log (2n)$ to get  
$$
\mathbb P\left(
\big|\|UXV\|-\|UGV\|\big|\ge 
C\Big(\sigma_*(H_X)\sqrt t + R^{1/3}\sigma(H_X)^{2/3}t^{2/3} + Rt\Big), \, \max_{i,j}\|Z_{ij}\|\le R \right)
\le 2n e^{-t} \le n^{-3},
$$
for some universal constant $C$. 
Using the estimates obtained above,
$$
\sigma_*(H_X)\sqrt t\le 4\sqrt{2}\|U\|\,\|V\|
\sqrt{\frac{\log (2n)}{n}},
$$
$$
R^{1/3}\sigma(H_X)^{2/3}t^{2/3}
\le
C'\|U\|\,\|V\|
\,r_n^{1/6}(\log n)^{2/3},
$$
and
$$
Rt
\le
C'\|U\|\,\|V\|
\,r_n^{1/2}\log n,
$$
for some universal constant $C'$. 
Therefore, on the event $\{\max_{i,j}\|Z_{ij}\|\le R\}$, except with probability $n^{-3})$, 
$$
\big|\|UXV\|-\|UGV\|\big| \le
C''\|U\|\,\|V\|\left(\sqrt{\frac{\log n}{n}}
+r_n^{1/6}(\log n)^{2/3}+r_n^{1/2}\log n
\right),
$$
for some universal constant $C''$. It remains to control the event $\{\max_{i,j}\|Z_{ij}\|>R\}$. Using Markov's inequality and \eqref{eq: bound-Rbar}, 
$$
\mathbb P\left(\max_{i,j}\|Z_{ij}\|>R\right)
\le \frac{\mathbb E\max_{i,j}\|Z_{ij}\|^2}{R^2}
\le \frac{\|U\|^2\|V\|^2 r_n^2}{C_0^2\|U\|^2\|V\|^2 r_n}
=\frac{r_n}{C_0^2}.
$$
Putting the above together, we finish the proof. 
\end{proof}

\begin{lemma}[One-layer universality for  correlated Gaussian entries]
\label{lem:one-layer-correlated}
Let $U,V$ be $n\times n$ deterministic matrices. Let $X=(X_{ij})$ be a centered
Gaussian $n\times n$ matrix satisfying $\mathbb E X_{ij}^2=\frac2n$ for every \(i,j\). Let
$$
    \rho_n:=
    \max_{(i,j)\ne(k,l)}
    \left|
    \operatorname{Cov}(X_{ij},X_{kl})
    \right|.
$$ Let $G$ be an $n\times n$ random matrix whose
entries are independent, centered Gaussian variables of variance $2/n$. 
Then there exists a universal constant $C>0$ such that
$$
\mathbb P\left( \big|\|UXV\|-\|UGV\|\big|>C\|U\|\|V\|\left[\sqrt{\frac{\log n}{n}}+n^{5/4}\sqrt{\rho_n\log n}\right]\right)
\le Cn^{-3}.
$$
\end{lemma}
\begin{proof}
As in the proof of Lemma~\ref{lem:one-layer}, we use Hermitian dilation. 
Define
$$
    H_X:=
    \begin{pmatrix}
        0&UXV\\
        V^*X^*U^*&0
    \end{pmatrix},
    \qquad
    H_G:=
    \begin{pmatrix}
        0&UGV\\
        V^*G^*U^*&0
    \end{pmatrix}.
$$
Then $H_X$ and $H_G$ are centered self-adjoint Gaussian matrices of dimension
$2n$, and
$$
    \|H_X\|=\|UXV\|,
    \qquad
    \|H_G\|=\|UGV\|.
$$
We apply Proposition~8.2 of \cite{brailovskaya2024} to $H_X$ and $H_G$, and we now estimate the relevant parameters. 
Carrying a similar computation as in Lemma~\ref{lem:one-layer}, we have 
$$
\sigma_*(H_G) :=
    \sup_{\|x\|=\|y\|=1}
    \left(
    \mathbb E|\langle x,H_Gy\rangle|^2
    \right)^{1/2}\leq 2\sqrt{2} \frac{\|U\|\|V\|}{\sqrt n}.
$$
To estimate $\sigma_*(H_X)$, let $x,y\in \mathbb{R}^n$ be
unit vectors and write 
$$
    \langle y,UXVx\rangle
    =
    \sum_{i,j} X_{ij}c_{ij},
$$
where
$$
c_{ij}
    =
    \langle U^*y,e_i\rangle\langle e_j,Vx\rangle.
$$
The diagonal covariance contribution gives
$$
    \frac2n\sum_{i,j}|c_{ij}|^2
    \le
    \frac{2\|U\|^2\|V\|^2}{n}.
$$
For the off-diagonal covariance contribution, we use by Cauchy--Schwarz that
$$
    \sum_{i,j}|c_{ij}|
    \le
     n\|U\|\|V\|.
$$
Therefore
$$
\sum_{(i,j)\ne(k,l)}\left|\operatorname{Cov}(X_{ij},X_{kl})c_{ij}c_{kl}\right|
\le\rho_n n^2\|U\|^2\|V\|^2.
$$
It follows that
$$
    \sigma_*(H_X)
    \le
    2\|U\|\|V\|
    \left(
        \sqrt{\frac{2}{n}}+n\sqrt{\rho_n}
    \right).
$$

It remains to estimate the covariance distance $\Delta(H_X,H_G)$ appearing in
Proposition~8.2 of \cite{brailovskaya2024}. Let $M\in M_{2n}(\mathbb C)$ with
$\|M\|\le1$, and write
$$
    M=
    \begin{pmatrix}
        M_{11}&M_{12}\\
        M_{21}&M_{22}
    \end{pmatrix}.
$$
We estimate one block of
$
    \mathbb E[H_XMH_X]-\mathbb E[H_GMH_G],
$
the other blocks being treated identically. The upper-left block $\Gamma$ equals
$$
  U\left(\mathbb E[XAX^*]-\mathbb E[GAG^*]\right)U^*,
$$
where we denoted $VM_{22}V^*$ by $A$. We have 
$$
\Gamma_{ij}= \sum_{k,l} \left(
    \operatorname{Cov}(X_{ik},X_{jl})-
    \frac2n\mathbf 1_{\{i=j,\ k=l\}}
    \right)A_{kl}.
$$
Since the entries of $X$ have variance $2/n$, the term with $i=j$ and $k=l$ cancels, and therefore
$$
    |\Gamma_{ij}|
    \le
    \rho_n\sum_{k,l}|A_{kl}|.
$$
Using
$$
    \sum_{k,l}|A_{kl}|
    \le n \|A\|_{\mathrm{HS}}
    \le
    n^{3/2}\|A\|,
$$
we get
$$
    |\Gamma_{ij}|
    \le
    \rho_n n^{3/2}\|A\|.
$$
Consequently,
$$
    \|\Gamma\|
    \le
    n\max_{i,j}|\Gamma_{ij}|
    \le
    \rho_n n^{5/2}\|A\|.
$$
Since $\|VM_{22}V^*\|\le \|V\|^2$, the upper-left block is bounded in norm by
$$
    \rho_n n^{5/2}\|U\|^2\|V\|^2.
$$
The same estimate applies to the other blocks and we get
$$
    \Delta(H_X,H_G)
    \le
    C\rho_n n^{5/2}\|U\|^2\|V\|^2,
$$
for some universal constant $C$. Letting $x=2\sqrt{\log(2n)}$ and gathering the previous estimates, we deduce that
$$
  \Delta(H_X,H_G)^{\frac12}\sqrt{\log(2n)}
  +\bigl(\sigma_*(H_G)+\sigma_*(H_X)\bigr)x
  \le C'\|U\|\|V\|\left(\sqrt{\frac{\log n}n}+n^{5/4}\sqrt{\rho_n\log n}\right),
$$
for some universal constant $C'$.
We finish the proof by applying Proposition~8.2 of \cite{brailovskaya2024}.
\end{proof}

\begin{proof}[Proof of Theorem~\ref{th:jacobian-universality}]
We prove the theorem under assumption (A); the other case follows verbatim by substituting the use of Lemma~\ref{lem:one-layer} by Lemma~\ref{lem:one-layer-correlated}. Define $J^{(0)}:=J_X$ and, for $1\le r\le L$,
$$
J^{(r)}:=X_LD_{L-1}\cdots X_{r+1}D_r G_rD_{r-1}G_{r-1}D_{r-2}\cdots G_1D_0.
$$
Thus $J^{(L)}=J_G$. By the triangle inequality,
\begin{equation}\label{eq:telescoping}
    \big|\|J_X\|-\|J_G\|\big|\le \sum_{r=1}^L \big|\|J^{(r-1)}\|-\|J^{(r)}\|\big|.
\end{equation}
Fix $r\in\{1,\ldots,L\}$. Conditionally on all matrices except $X_r$ and $G_r$, we may write
$$
    J^{(r-1)}=U_rX_rV_r,
    \qquad
    J^{(r)}=U_rG_rV_r,
$$
where
$$
    U_r:=X_LD_{L-1}\cdots X_{r+1}D_r
$$
and
$$
    V_r:=D_{r-1}G_{r-1}D_{r-2}\cdots G_1D_0.
$$
The matrices $U_r,V_r$ are deterministic under this conditioning. We first show that these matrices are bounded with high probability. For $K>0$, define
$$
\Omega_K:=
\left\{\max_{1\le l\le L}\|X_l\|\le K,\quad \max_{1\le l\le L}\|G_l\|\le K\right\}.
$$
Since $\|D_l\|\le1$, on \(\Omega_K\) we have
$$
    \|U_r\|\le K^{L-r},
    \qquad
    \|V_r\|\le K^{r-1},
$$
and hence
$$
    \|U_r\|\|V_r\|\le K^{L-1}.
$$
On \(\Omega_K\), Lemma~\ref{lem:one-layer}, conditionally on the other layers and the diagonal matrices, gives
$$
    \mathbb P\!\left(\big|\|J^{(r-1)}\|-\|J^{(r)}\|\big|\ge C K^{L-1}
    \left(\sqrt{\frac{\log n}{n}}+r_n^{1/6}(\log n)^{2/3}+r_n^{1/2}\log n\right)\,\middle|\, U_r,V_r\right)
\le Cn^{-3}+Cr_n,
$$
for some universal constant $C$. 
Consequently,
$$
\mathbb P\left(
\big|\|J^{(r-1)}\|-\|J^{(r)}\|\big|\ge C K^{L-1}
    \left(\sqrt{\frac{\log n}{n}}+r_n^{1/6}(\log n)^{2/3}+r_n^{1/2}\log n\right)\right)
\le \mathbb P(\Omega_K^c)+ Cn^{-3}+Cr_n. 
$$
Using \eqref{eq:telescoping} and a union bound over $r=1,\ldots,L$, we get
\begin{equation}\label{eq:jacobian-diff}
\mathbb P\left(
\big|\|J_X\|-\|J_G\|\big|\ge C L K^{L-1}
    \left(\sqrt{\frac{\log n}{n}}+r_n^{1/6}(\log n)^{2/3}+r_n^{1/2}\log n\right)\right)
\le L\mathbb  P(\Omega_K^c)+ CLn^{-3}+CLr_n. 
\end{equation}
By standard operator norm estimates for Gaussian matrices (see, \cite[Theorem~4.4.3]{Vershynin2018}), we have 
$$
\mathbb P\left(\max_{1\le l\le L}\|G_l\|\ge K/2\right)
\le C'e^{-cn},
$$
for some constants $C', c$ depending only on $L$ and $K$. On the other hand, applying Lemma~\ref{lem:one-layer} to each $X_l$ with $U=V=I$, we deduce by union bound that 
\begin{equation}\label{eq:norm-concentration}
    P(\Omega_K^c)
    \leq\mathbb P\left(\max_l\|G_l\|>\frac K2\right)+\sum_{l=1}^L\mathbb P(|\|X_l\|-\|G_l\|| > K/2)
    \leq C''(e^{-cn}+ n^{-3}+r_n),
\end{equation}
for some constant $C''$ depending only on $L$ and $K$. Since  $r_n\log^4 n\longrightarrow0$, then combining \eqref{eq:jacobian-diff} and \eqref{eq:norm-concentration}, we deduce that 
$$
\big|\|J_X\|-\|J_G\|\big|
\xrightarrow[n\to\infty]{\mathbb P}0.
$$
It remains to combine the above with Theorem~\ref{thm:criticality_i.i.d} to finish the proof. 
\end{proof}

    \section{Jacobian in Sparse Networks}
    Network pruning aims at removing redundant weights that do not significantly affect model performance \citep{lecun_pruning}. Such weights are identified using some pruning criterion that determines the importance  of each weight in the network.
    By removing these weights, one could significantly reduce the computational requirements for both the training and deployment of DNNs \citep{hassibi_pruning}. After pruning, the network becomes \emph{sparse}, and training such networks has been proven challenging in practice \citep{franklelottery2019}. In this section, we analyze the Jacobian norm of sparse networks obtained by random pruning and magnitude-based pruning and provide the necessary conditions for stability. Notably, we show that a simple scaling trick ensures depth stability of the Jacobian. We also identify an \emph{edge of stability} for sparsity and support our theory with empirical evidence. Our current analysis is limited to \emph{pruning at initialization} \citep{lee_snip2018, wang2022recent, hayou2021robust} of MLPs, i.e. pruning performed on i.i.d weights. Extending this theory to modern architectures is not straightforward and is an interesting question for future work.\\

    \noindent\textbf{Pruning.} Consider the MLP architecture described in \cref{eq:mlp_network}. Pruning involves the application of a mask $\mask \in[0,\beta_p]^p$ (where~$p$ is the total number of parameters in the network)
    to the weights of the network~$\W$ producing a \emph{pruned} network with weights~$\W^B$, where $\W^B= \mask \odot \W$ is the Hadamard (i.e.,  element-wise) product.
    We say weight $i\in [p]$ is pruned if $b_i = 0$. This can be performed via different procedures. A standard approach to generating masks is to compute a score $g_k^{ij}$ for each weight $W_k^{ij}$ according to some criterion.
    The mask is then created by keeping the top~$m$ weights by score, where~$m$ is chosen to meet some desired sparsity level~$s$ (fraction of weights to remove). The Jacobian of the pruned network~$J^B_1$ is given by:
    $$
    J^B_1 = \prod_{l=1}^L\W^B_l D_{l-1}.
    $$
    We propose to study the Jacobian norm~$\|J^B_1\|$ for networks pruned at initialization with two different pruning methods:
    \begin{enumerate}[leftmargin=*]
        \item {\bf Random Pruning}: weights are randomly pruned with probability~$s$ (the sparsity).
        \item {\bf Score-Based Pruning}: weights are scored using a certain criteria (e.g.,  magnitude, sensitivity,). In this work, we consider magnitude-based pruning for tractability. Extending our results to other score-based pruning algorithms such as sensitivity-based pruning (which uses gradients) is not trivial and will require different proof machinery.
    \end{enumerate} 
    The results of this section are applications of the universality theorem (\cref{th:jacobian-universality}). The key step is to identify the correct rescaling of the retained weights so that the effective entries have the same second moment as the critically initialized Gaussian model. Once this variance matching is established, stability follows by verifying the maximal-entry condition appearing in \cref{th:jacobian-universality}. The main takeaway is that scaling the weights is required to maintain stability in pruned networks, and that the scaling factor depends on the pruning method.

    \subsection{Random Pruning}
    In the subsequent analysis, we use the notation $a_n \gg b_n$ for two positive sequences $a_n, b_n$, whenever $ b_n = \smallO(a_n)$.

    Recall that $J_1^{iid}$ refers to the Jacobian of an iid randomly initialized MLP as in \cref{sec:jacobian_iid}. The following result shows that using a scaling trick, the Jacobian of the pruned network is asymptotically identical to that of non-pruned iid randomly initialized network.
    \begin{theorem}[Scaling guarantees stability]\label{thm:fixed_sparsity}
        Consider random pruning with sparsity level $s_n \in (0,1)$ that can either depend on~$n$ or be constant. Set the mask $B:=(1 - s_n)^{-1/2} B_{i,j}$ where the $B_{i,j}$'s are iid Bernoulli$(1-s_n)$ variables. Then, under the assumption that $1 - s_n \gg \frac{\log^9n}{n}$, the Jacobian of the scaled sparse network given by $
        J^B_1 =\prod_{l=1}^L  (B_l \odot W_l) D_{l-1} $ satisfies
$$
\frac{\|J_1^B\|}{\|J_1^{iid}\|}\xrightarrow[n\to\infty]{\mathbb P}1.
$$
As a result, under these conditions, the Jacobian of the pruned network is also stable.
    \end{theorem}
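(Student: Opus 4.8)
The plan is to deduce this corollary directly from the general stability theorem (\cref{thm:stability_theorem}), exploiting its \emph{universality}: whenever conditions (i)--(iv) hold, the limiting norm is the same deterministic quantity $\|s_1\cdots s_L\|$, independently of the fine structure of the weights and mask. I would therefore verify that both the scaled sparse model and the critical iid model fall into the framework \labelcref{assump:subgaussian,assump:bernoulli,assump:bounded,assump:independence} and satisfy the four conditions; the two limits then automatically coincide.

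First I would check that the scaled pruned network fits the setting. Take $\W$ with entries $w_{i,j}\sim\normal(0,2/n)$ (the critical iid initialization), $D$ diagonal Bernoulli$(\tfrac12)$, and mask entries $b_{i,j}=(1-s_n)^{-1/2}B_{i,j}$ with the $B_{i,j}\sim$ Bernoulli$(1-s_n)$ iid and independent of~$\W$. Then \labelcref{assump:subgaussian} holds since Gaussians are $\bigO(1/n)$-subgaussian, \labelcref{assump:bernoulli} is immediate, \labelcref{assump:bounded} holds with $\beta_n=(1-s_n)^{-1/2}$, and \labelcref{assump:independence} holds by construction.

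The heart of the argument is the verification of condition (ii), which is exactly where the scaling factor earns its keep. Since $B_{i,j}$ is Bernoulli, $B_{i,j}^2=B_{i,j}$, so $\E|b_{i,j}|^2=(1-s_n)^{-1}\E[B_{i,j}^2]=(1-s_n)^{-1}(1-s_n)=1$; combined with $\E|w_{i,j}|^2=2/n$ and the independence of $B$ and $\W$, one gets $\tfrac12\sum_{k=1}^n\E|b_{i,k}w_{i,k}|^2=\tfrac12\,n\cdot(2/n)=1$ for every~$i$, so the expression in (ii) is identically~$0$. This is the sense in which $(1-s_n)^{-1/2}$ is the correct normalization: it restores the second moment destroyed by pruning. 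Condition (i) reads $\frac{\log^4 n}{n}\beta_n^2=\frac{\log^4 n}{n(1-s_n)}\to0$, which is precisely the hypothesis $1-s_n\gg\frac{\log^4 n}{n}$. Finally, since $\W$ is centered Gaussian and independent of $B$, we have $(B,-\W)\stackrel d=(B,\W)$, so conditions (iii) and (iv) hold by the remark following \cref{thm:stability_theorem}; concretely, the off-diagonal covariances $\cov(b_{i,j}w_{i,j},b_{k,l}w_{k,l})$ and the means $\E[b_{i,j}w_{i,j}]$ all vanish exactly.

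Applying \cref{thm:stability_theorem} then yields $\lim_{n\to\infty}\|J_1^B\|=\|s_1\cdots s_L\|$. To finish, I would observe that the critical iid network is the $s_n=0$ instance of the same framework (mask identically~$1$, $\beta_n=1$), for which the identical computations give $\lim_{n\to\infty}\|J_1^{iid}\|=\|s_1\cdots s_L\|$ as well; equating the two limits gives the claim, and stability of $J_1^B$ follows from that of $J_1^{iid}$ established in \cref{thm:criticality_i.i.d}. I do not anticipate a genuine obstacle here: the result is essentially a packaging of the universality in \cref{thm:stability_theorem}, and the only real computation---the second-moment matching in (ii)---is exactly what dictates the $(1-s_n)^{-1/2}$ scaling appearing in the statement.
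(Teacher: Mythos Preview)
Your proposal is correct and follows exactly the paper's approach: apply \cref{thm:stability_theorem} and check that conditions (i)--(iv) are satisfied, which the paper dismisses in one line as ``trivially satisfied.'' You have simply spelled out the verifications (in particular, that the scaling $(1-s_n)^{-1/2}$ is precisely what makes (ii) hold exactly, and that the sparsity hypothesis is precisely condition~(i)), and also made explicit that the iid model is the $s_n=0$ instance converging to the same free-probability limit $\|s_1\cdots s_L\|$.
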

\begin{proof}
It is easy to check that the entries of $X_l=B_l \odot W_l$ are independent, centered and of variance $2/n$. Thus the variance matching condition of
\cref{th:jacobian-universality} is satisfied and it remains to verify the maximal-entry condition. Since
$$
|X_{l,ij}|\le (1-s_n)^{-1/2}|W_{l,ij}|,
$$
we obtain
$$
r_n\le (1-s_n)^{-1/2}
\left(\mathbb E\max_{l,i,j} |W_{l,ij}|^2\right)^{1/2}.
$$
Using a standard estimate on the maximum of Gaussian variables (see for instance~\cite[Exercise~2.5.10]{Vershynin2018} or~\cite[Lemma~5.2]{vanHandel16}), we have 
$$
\left(\mathbb E \max_{l,i,j}|W_{l,ij}|^2\right)^{1/2}
\lesssim \sqrt{\frac{\log n}{n}}.
$$
Hence
$$
r_n
\lesssim
\sqrt{\frac{\log n}{n(1-s_n)}}.
$$
The assumption
$$
1-s_n\gg \frac{\log^9 n}{n}
$$
ensures that
$$
r_n\log^4 n\to0.
$$
The result now follows from \cref{th:jacobian-universality}.
\end{proof}

    \cref{thm:fixed_sparsity} reveals that, once pruning has been performed, re-scaling is essential for stabilizing the Jacobian. Specifically, if we don't scale the weights (or equivalently the mask) with $(1-s_n)^{-1/2}$ then the theorem can be stated as having~$\|J_1^B\|$ asymptotically proportional to $\Tilde{\Theta}_L((1-s_n)^{L/2})$.\footnote{Recall that $\|J_1^{iid}\| = \Tilde{\Theta}_L(1)$ which yields the stated result.} This exponential dependence on depth indicates that stability cannot be attained without scaling the weights. In other words, when starting with a critically initialized network, the weights must be re-scaled after pruning to account for the resulting sparsity. This scaling factor $(1-s_n)^{-1/2}$ makes the infinite-width behavior of the spectral norm similar to that of a non-pruned critically initialized neural network, thereby guaranteeing stability in the sparse network, as evidenced in \cref{fig:jacnorm_depth_prunning}. Further experiments are provided in \cref{app:further_empirical_results}.

    In \cref{thm:fixed_sparsity}, the sparsity~$s_n$ can depend on~$n$. We provide an upper bound on the sparsity in terms of the width, in order for the stability result to hold. This result allows us to identify an \emph{edge of stability}, defined as a maximal sparsity (in terms of~$n$) so that the stability holds.
    This result highlights an interesting \textit{phase transition} phenomenon with respect to the sparsity. When the sparsity is of order $1 - s_n \sim n^{-1}$ up to a logarithmic factor, the result of \cref{thm:fixed_sparsity} no longer holds, and the spectral norm of the Jacobian is significantly different from that of a non-pruned critically initialized network, see e.g.~\cite{MR3945756, MR4234995}. In \cref{fig:jacnorm_depth_prunning},  We observe this behavior when the sparsity hits the level~$99\%$, which is of order $\log_{10}(256)/256 \approx 0.009$. It is worth noting that the condition $1 - s_n \gg n^{-1} \log^9n$ is a sufficient condition, and it is likely that phase transition occurs at a smaller threshold.

    \begin{figure}
        \centering
        \begin{center}
            \includegraphics[width=0.3\textwidth]{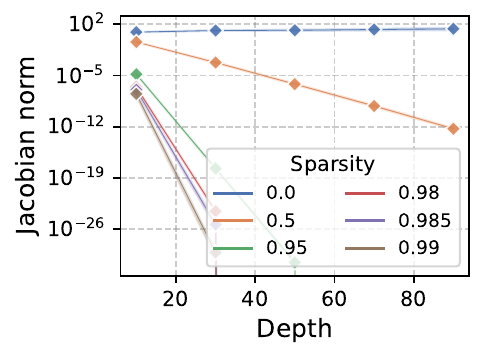}
            \includegraphics[width=0.3\textwidth]{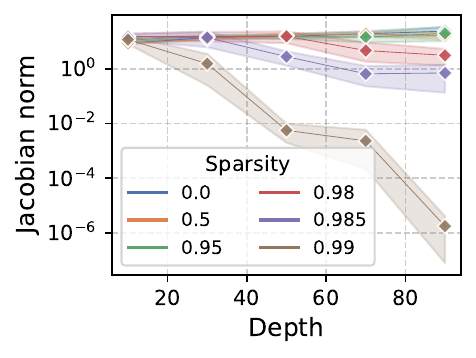}
        \end{center}
        \caption{\small{Jacobian norm after pruning at initialization as depth increases in a randomly pruned MLP of width $n = 256$ (input sampled randomly from MNIST). \textbf{(Left)} Without scaling. \textbf{(Right)} With scaling from \cref{thm:fixed_sparsity}.}}
        \label{fig:jacnorm_depth_prunning}
    \end{figure}
    \subsection{Score-Based Pruning}

    To complement the previous section, we show that a similar stability result holds with score-based pruning, however, the scaling factor is different in this case. We restrict our analysis to magnitude-based pruning, performed at initialization, where weights are scored based on their magnitude $g_w = |w|$~\cite{han2015learning}.  The result can in principle be extended to other score-based methods such as gradient-based pruning \citep{lee_snip2018}, but it will require different proof machinery.
    The main takeaway is that the scaling factor depends on the pruning method. \\
    \begin{theorem}[Magnitude-based pruning, deterministic threshold]\label{thm:magnitude-based-deterministic}
        Suppose that the coefficients of the weight matrix~$\W$ are iid $\normal(0,\frac2n)$ variables,
        and consider magnitude-based pruning in which all coefficients above a threshold $t\le\sqrt{\frac{4\log n-c\log\log n}n}$ in absolute value
        are kept, with $nt^2\to\infty$ and $c>36$. Specifically, choose the mask $B:=(b_{i,j})$ defined by
        \begin{equation}
          b_{i,j}:={\left(\sqrt\frac\pi{nt^2}\,e^{nt^2/4}\right)}^{\frac12}\mathbf1_{\{|w_{i,j}|>t\}},
          \label{eq:expression-mask}
        \end{equation}
        Then, the Jacobian of the scaled sparse network given by $
        J^B_1 =  \prod_{l=1}^L  (B_l \odot W_l) D_{l-1},$ satisfies,
$$
\frac{\|J_1^B\|}{\|J_1^{iid}\|}\xrightarrow[n\to\infty]{\mathbb P} 1.
$$
        As a result, under these conditions, the Jacobian of the pruned network is also stable.
    \end{theorem}
    \begin{proof}
Let
$$
a_n :=\left(\sqrt{\frac{\pi}{nt^2}}e^{nt^2/4}\right)^{1/2},
$$
so that $b_{ij}=a_n\mathbf 1_{\{|w_{ij}|>t\}}$. We first replace $a_n$ by the exact variance-normalizing constant
$$
    \widehat a_n:=\left(\frac{2}{n \mathbb E\left[
        |w_{ij}|^2\mathbf 1_{\{|w_{ij}|>t\}}
    \right]}\right)^{1/2}.
$$
Then
$$
\widehat X_{ij} :=  \widehat a_n w_{ij}\mathbf 1_{\{|w_{ij}|>t\}}
$$
is centered, independent over $i,j$, and satisfies
$$
    \mathbb E|\widehat X_{ij}|^2=\frac2n.
$$    
It remains to check the maximal-entry condition in Theorem~\ref{th:jacobian-universality}. Standard estimates on Gaussian tails imply that 
$$
\mathbb E\left[|w_{ij}|^2\mathbf 1_{\{|w_{ij}|>t\}}\right]\sim \frac{2t}{\sqrt{n\pi}}\,e^{-nt^2/4},
$$
as $nt^2\to \infty$. Consequently, $\widehat a_n^2 \sim a_n^2$ and since
$$
    nt^2\le 4\log n-c\log\log n,
$$
we have
$$
\widehat a_n^2\lesssim a_n^2  \lesssim n(\log n)^{-c/4}.
$$
Therefore, using the standard Gaussian maximum estimate and the fact that $L$ is fixed, 
$$
\begin{aligned}
r_n &:=
\left(\mathbb E\max_{l,i,j}|\widehat X_{l,ij}|^2\right)^{1/2}  \\
&\le \widehat a_n \left(\mathbb E\max_{l,i,j}|w_{l,ij}|^2\right)^{1/2}
\lesssim (\log n)^{1/2-c/8}.
\end{aligned}
$$
Since $c>36$, it follows that $r_n\log^4 n\to0$. 
Thus $\widehat X_l$ satisfies the assumptions of Theorem~\ref{th:jacobian-universality}. Therefore, 
$$
 \frac{\|\widehat J_1\|}{\|J_1^{iid}\|}
    \xrightarrow[n\to\infty]{\mathbb P}
    1,
$$
where 
$$
\widehat J_1:=\prod_{l=1}^L \widehat X_l D_{l-1}.
$$
Since $L$ is fixed and $\widehat a_n/a_n\to1$, we write 
$$
\frac{\|J_1^B\|}{\|J_1^{iid}\|}= \left(\frac{a_n}{\widehat a_n}\right)^L  \frac{\|\widehat J_1\|}{\|J_1^{iid}\|}\xrightarrow[n\to\infty]{\mathbb P}
    1,
$$
and finish the proof. 
    \end{proof}

    In score-based pruning, we
    compute some scores $g \in \reals^p$ and prune based on the value of~$g$: keep the $(1-s)\times p$ weights with the highest scores, where~$n$ is the total number of parameters in the matrix.\footnote{In general, magnitude-based pruning can be performed on a matrix level or a model level. For the sake of simplification, we restrict our analysis to the matrix level.}
    This introduces a dependence on the coefficients, which becomes weaker when the number of parameters~$p$ gets larger. Asymptotically (in~$p$), score-based pruning becomes a rejection algorithm.

    \begin{theorem}[Magnitude-based pruning, random threshold]\label{thm:magnitude-based}
        Suppose that the coefficients of the weight matrix~$\W=(w_{i,j})$ are iid $\normal(0,\frac2n)$ variables,
        and consider magnitude-based pruning in which the $r\ge n\log^c n$ largest weight coefficients in absolute value
        are kept, for some $c>9$, and assume $r=o(n^2)$. Specifically, let $t:=t(n)>0$ solve
        \[\P(|w_{i,j}|\ge t)=\frac r{n^2}\]
        and choose the mask $B':=(b'_{i,j})$ defined by
        \[b'_{i,j}:={\left(\sqrt{\frac\pi{nt^2}}e^{nt^2/4}\right)}^{\frac12}\mathbf1_{\{|w_{i,j}|\ge w^{(r)}\}},\]
        where $w^{(1)}\ge w^{(2)}\ge\cdots\ge w^{(n^2)}$
        is the non-increasing rearrangement of $\{|w_{i,j}|:i,j\le n\}$.
        Then, the Jacobian of the scaled sparse network given by $
        J^{B'}_1 =  \prod_{l=1}^L  (B_l' \odot W_l) D_{l-1},$ satisfies,
        $$
\frac{\|J_1^{B'}\|}{\|J_1^{iid}\|}
    \xrightarrow[n\to\infty]{\mathbb P}
    1.$$
        As a result, under these conditions, the Jacobian of the pruned network is also stable.
    \end{theorem}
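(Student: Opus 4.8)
The plan is to deduce this from the general Stability Theorem (\cref{thm:stability_theorem}): once the magnitude-pruned model is shown to fit the setting \labelcref{assump:subgaussian,assump:bernoulli,assump:bounded,assump:independence} and to satisfy conditions (i)--(iv), we obtain $\lim_n\|J_1^B\|=\|s_1\cdots s_L\|$, and applying the same theorem to the unpruned $\normal(0,\tfrac2n)$ weights identifies $\lim_n\|J_1^{iid}\|$ with the \emph{same} semicircular product, yielding the claimed equality. Here $\W$ is $\bigO(\tfrac1n)$-subgaussian, $D$ is independent Bernoulli$(\tfrac12)$, and the mask has bounded entries $|B_{i,j}|\le\beta_n:=\tfrac n{\sqrt{r_n\log(n^2/r_n)}}$, so \labelcref{assump:subgaussian,assump:bernoulli,assump:bounded,assump:independence} all hold (recall $B$ is permitted to depend on $\W$). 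It remains only to verify (i)--(iv).

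The observation that disposes of (iii) and (iv) \emph{exactly} is a sign symmetry. Since the mask keeps the $r_n$ largest values among $\{|w_{i,j}|\}$, it is a function of the magnitudes alone, so the global flip $\W\mapsto-\W$ leaves both $B$ and the threshold $w^{(r_n)}$ invariant while sending each coefficient $b_{i,j}w_{i,j}$ to its negative. As $\W\stackrel d=-\W$, this gives $(B,-\W)\stackrel d=(B,\W)$, and by the remark following \cref{thm:stability_theorem} every mean $\E[b_{i,j}w_{i,j}]$ and every cross-covariance $\cov(b_{i,j}w_{i,j},b_{k,l}w_{k,l})$ with $(i,j)\ne(k,l)$ vanishes identically. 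Thus (iii) and (iv) are trivially satisfied, and the delicate dependence introduced by the common order-statistic threshold never has to be estimated.

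Condition (i) is a direct calculation: with $\beta_n^2=\tfrac{n^2}{r_n\log(n^2/r_n)}$ one has $\tfrac{\log^4n}n\beta_n^2=\tfrac{n\log^4n}{r_n\log(n^2/r_n)}$, which tends to $0$ precisely because $r_n\gg n\log^3n$ together with $r_n=o(n^2)$ forces $r_n\log(n^2/r_n)\gg n\log^4n$ (the boundary case $r_n\asymp n\log^3n$, where $\log(n^2/r_n)\sim\log n$ makes the product $\asymp n\log^4 n$, is exactly what the strict hypothesis excludes). The real content is (ii). Writing $w_{i,j}=\sqrt{\tfrac2n}\,Z_{i,j}$ with $Z_{i,j}$ standard normal and using row-exchangeability, one has $\tfrac12\sum_{k}\E[|b_{i,k}w_{i,k}|^2]=\tfrac12\beta_n^2\,\tfrac1n\,\E\bigl[\sum_{\mathrm{top}\,r_n}w^2\bigr]$, where $\sum_{\mathrm{top}\,r_n}w^2$ denotes the sum of the $r_n$ largest squared entries. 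I would evaluate this expected order-statistic sum by extreme-value asymptotics: letting $t_n$ be the $(1-r_n/n^2)$-quantile of $|Z|$, the Mills-ratio bound $\bar\Phi(t)=\tfrac{\varphi(t)}t(1-t^{-2}+\cdots)$ gives $t_n^2\sim2\log(n^2/r_n)$ and $\E[Z^2\mathbf 1_{|Z|\ge t_n}]\sim\tfrac{r_n}{n^2}\,t_n^2$, so $\tfrac1n\E[\sum_{\mathrm{top}\,r_n}w^2]$ is of order $\tfrac{r_n\log(n^2/r_n)}{n^2}$; the numerical prefactor in $\beta_n$ is calibrated exactly so that multiplying by $\tfrac12\beta_n^2$ yields the edge-of-chaos value $1$, matching the unpruned $\sigma_w^2=2$ normalization.

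The main obstacle is making this step (ii) rigorous. One must (a) replace the sum of the top-$r_n$ order statistics by $n^2\,\E[Z^2\mathbf 1_{|Z|\ge t_n}]$ with a controlled error, which requires concentration of the number of exceedances of the level $t_n$ about its mean $r_n$ (a binomial approximation, valid since $r_n\to\infty$) and control of the boundary contribution near the quantile; and (b) pin down the constant through the sharp Mills-ratio expansion, checking that the subleading $\log\log$ corrections to $t_n^2\sim2\log(n^2/r_n)$ do not perturb the limit and that the resulting $\smallO(1)$ is uniform over the admissible range $n\log^3n\ll r_n=o(n^2)$. With (ii) established, the remaining conditions and the final invocation of \cref{thm:stability_theorem} are routine.
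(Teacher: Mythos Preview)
Your approach coincides with the paper's on the reduction to \cref{thm:stability_theorem}, the sign-symmetry argument disposing of (iii)--(iv), and the verification of (i). The only divergence is in the computation for condition~(ii). Where you propose to estimate $\E\bigl[\sum_{\text{top }r_n}w^2\bigr]$ directly via Mills-ratio asymptotics for the Gaussian tail (identifying the threshold quantile $t_n$ with $t_n^2\sim2\log(n^2/r_n)$ and expanding $\E[Z^2\mathbf 1_{|Z|\ge t_n}]$), the paper instead applies the probability-integral transform: setting $\Phi(x):=\P(X^2\le x)$ so that $\Phi(X^2)$ is uniform on $[0,1]$, the conditional-rank expansion turns $\E\bigl[w_{i,k}^2\mathbf 1_{|w_{i,k}|\ge w^{(r_n)}}\bigr]$ into $\tfrac{2}{n\cdot n^2}\,\E\sum_{j=1}^{r_n}\Phi^{-1}(1-U_n^{(j)})$ for uniform order statistics $U_n^{(1)}\le\cdots\le U_n^{(n^2)}$, and the sum is then handled by the Riemann approximation $\sim n^2\,\E\int_0^{U_n^{(r_n)}}\Phi^{-1}(1-x)\,\mathrm dx$ together with $U_n^{(r_n)}\sim r_n/n^2$ and $\Phi^{-1}(1-x)\asymp\log(1/x)$. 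Both routes reach the same conclusion $\E[|b_{i,k}w_{i,k}|^2]\sim\tfrac2n$; the paper's device packages the randomness of the empirical threshold into the uniform order statistics, thereby bypassing the explicit concentration-of-exceedances argument you flagged as obstacle~(a), while the logarithmic growth of $\Phi^{-1}(1-x)$ plays exactly the role your Mills-ratio expansion does. The Riemann-sum step in the paper is treated at a comparable level of informality, so the rigor concerns you raise for your obstacles~(a)--(b) apply equally to the published version.
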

    \begin{proof}
        Note that $t=\frac2{\sqrt n}\erf^{-1}(1-\frac r{n^2})$, where $\erf(z):=\frac2{\sqrt\pi}\int_0^z e^{-x^2}dx$ denotes the standard error function. Since we have $\erf^{-1}(1-q)\le\sqrt{-{\log}q}$ for every $q\in(0,1]$ (see, e.g., \cite{pollak56}), the assumption on~$r$ ensures that $nt^2\to\infty$ and
        \[t\le\frac2{\sqrt n}\sqrt{\log{\frac{n^2}r}}\le\sqrt{\frac{4\log n-4c\log\log n}n},\]
        for $4c>36$ and~$n$ sufficiently large, so we may compare the random-threshold pruning $\mathbf W\odot B'$ to the model~$\mathbf W\odot B$ of \cref{thm:magnitude-based-deterministic} with the deterministic threshold~$t$.
        We will thus show that
    \begin{equation}\label{eq:exp-pruning diff}
        \lim_{n\to\infty}\E\|\W\odot B'-\W\odot B\|=0.
    \end{equation}
      By Markov's inequality, this implies
      $$
    \|\W\odot B'-\W\odot B\|
    \xrightarrow[n\to\infty]{\mathbb P}
    0.
$$
Since
$$
\|J_k^B-J_k^{B'}\|\le\|W_k\odot B'_k-W_k\odot B_k\|\,\|D_{k-1}\|\,\|J_{k+1}^{B'}\|
+
\|W_k\odot B_k\|\,\|D_{k-1}\|\,\|J_{k+1}^B-J_{k+1}^{B'}\|,
$$
submultiplicativity of the norm and a straightforward backward induction on $k$ show that
$$
    \|J_1^B-J_1^{B'}\|
    \xrightarrow[n\to\infty]{\mathbb P}
    0.
$$
Consequently,
$$
    \big|
        \|J_1^B\|-\|J_1^{B'}\|
    \big|
    \xrightarrow[n\to\infty]{\mathbb P}
    0,
$$
which combined with \cref{thm:magnitude-based-deterministic}, yields the desired result. 

The remainder of the proof aims to establish \eqref{eq:exp-pruning diff}. Factoring out the common scaling
\[{\left(\sqrt{\frac{\pi}{nt^2}}\, e^{nt^2/4}\right)}^{\frac12}
\lesssim\sqrt{n\log^{-c}n},\]
we have
      \[\|\W\odot B'-\W\odot B\|\lesssim\|\Delta\|\sqrt{n\log^{-c}n}\]
       where $\Delta:=(\Delta_{i,j})$ is the $n\times n$ matrix with coefficients
       \[\Delta_{i,j}:=\left(\mathbf1_{\{|w_{i,j}|\ge w^{(r)}\}}-\mathbf1_{\{|w_{i,j}|\ge t\}}\right)w_{i,j}.\]
        Since the law of $\Delta$ is invariant by permutation of its entries, we have
        by \cite[Theorem~1.2]{seginer2000} that
        \[ \E\|\Delta\|\lesssim  \E\max_{1\le i\le n}|R_i|+ \E\max_{1\le j\le n}|C_j|,\]
        where $|R_i|$ (resp.\ $|C_j|$) denotes the Euclidean norm of $\Delta$'s $i$-th row (resp., $\Delta$'s $j$-th column). By symmetry between rows and columns, it follows by Jensen's inequality that
        \[\E\|\Delta\|\le2\sqrt{\E{\max_{1\le i\le n}|R_i|^2}}.\]
        Let \[J(i):=\Bigl\{j:w^{(r)}>|w_{i,j}|\ge t\enspace\text{or}\enspace t>|w_{i,j}|\ge w^{(r)}\Bigr\},\]
       so that, by Hölder inequality,
       \[\E{\max_{1\le i\le n}|R_i|}^2=\E{\max_{1\le i\le n}\sum_{j\in J(i)}{|w_{i,j}|}^2}\le\left(\E{\max_{1\le i\le n}\card J(i)^{\frac{\alpha}{\alpha-1}}}\right)^{1-\frac1{\alpha}}
       \left(\E{\max_{1\le i,j\le n}|w_{i,j}|^{2\alpha}}\right)^{\frac1{\alpha}}\]
       for all $\alpha>1$.
  Using standard concentration estimate on the maximum of Gaussian variables (see for instance~\cite[Lemma~5.2]{vanHandel16}) with $\alpha:=\log n$, we obtain
        \[\E{\|\Delta\|}\lesssim\sqrt{\frac{\log n}n\E{\max_{1\le i\le n}\card J(i)}}
        \]
        and therefore
        \begin{equation}\label{eq:max-card}
            \E{\|\W\odot B'-\W\odot B\|}\lesssim\sqrt{\E{\max_{1\le i\le n}\card J(i)}}\log^{\frac{1-c}{2}}n.
        \end{equation}
        We now estimate $\E{\max_{1\le i\le n}\card J(i)}$. The sum $S:=\sum_{i=1}^n\card J(i)$
        counts how many $|w_{i,j}|$ land in the interval $[\min(t,w^{(r)}),\max(t,w^{(r)}))$.
        Recalling that $\P(|w_{i,j}|\ge t)=\frac r{n^2}$, it is straightforward to check that
        it corresponds to the absolute deviation $S=|N-r|$ of the Binomial($n^2,\frac r{n^2}$)
        random variable\[N:=\sum_{1\le i,j\le n}\mathbf1_{\{|w_{i,j}|\ge t\}}.\]
        Observe that, conditional on~$S$, the $|w_{i,j}|$'s remain exchangeable so the slices $(\card J(1),\ldots,\card J(n))$
        are obtained by sampling without replacement~$S$ indices from $\{1,\ldots,n\}^2$ and counting how many of these indices belong to each
        row~$1\le i\le n$. This is equivalent to a balls-and-bins problem (or multivariate hypergeometric distribution) with $S$ balls drawn without replacement from an urn containing $n^2$ balls, $n$ of which are of color~$i$ for every $1\le i\le n$. By
        \cite[Example 3.1(c)]{joagdev83}, the vector $(\card J(1),\ldots,\card J(n))$ is therefore negatively associated, which
        by \cite[Proposition~5]{dubhashi98} implies that each $\card J(i)$ satisfies the classical Bernstein inequality:
        \[\P\left(\card J(i)>\frac Sn+k\:\middle|\:S\right)
        \le\exp\left(-\frac{k^2/2}{\frac Sn+k/3}\right).\]
        By union bound,
        \[\P\left(\max_{1\le i\le n}\card J(i)>\frac Sn+k\:\middle|\:S\right)
        \le n\exp\left(-\frac{k^2/2}{\frac Sn+k/3}\right),\]
        and then
        \begin{align*}
        	\E\left[\max_{1\le i\le n}\card J(i)\:\middle|\:S\right]
        	&=\sum_{k=0}^\infty\P\left(\max_{1\le i\le n}\card J(i)> k\:\middle|\:S\right)\\[.4em]
        	&\le\frac{4S}n+2\log n+\sum_{k>\max(\frac{3S}n,2\log n)}n\exp\left(-\frac{k^2/2}{\frac Sn+k/3}\right)\\[.4em]
        	&\le\frac{4S}n+2\log n+\sum_{k>2\log n}n\exp\left(-\frac34k\right)\\[.4em]
        	&\lesssim\frac Sn+\log n.
        \end{align*}
        Now, recall that $\E[S]=\E{|N-\E[N]|}\le\sqrt{\var(N)}=\sqrt{r(1-\frac r{n^2})}\le n$. Taking expectations, we find
        \[\E\left[\max_{1\le i\le n}\card J(i)\right]\lesssim\log n.\]
        Plugging this back into \eqref{eq:max-card}, we deduce
        \[\E{\|\W\odot B'-\W\odot B\|}\lesssim\sqrt{\log^{\frac32-c}n},\]
       which tends to~$0$ since $c>9$.
    \end{proof}



    \begin{figure}
        \begin{center}
            \includegraphics[width=0.255\textwidth]{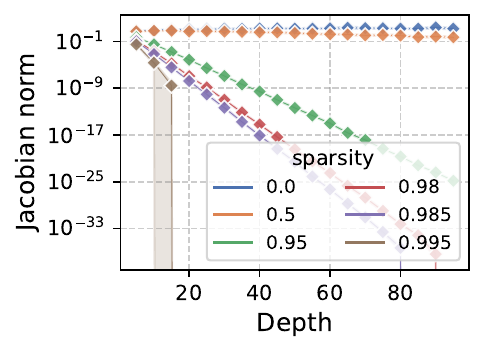}
            \includegraphics[width=0.249\textwidth]{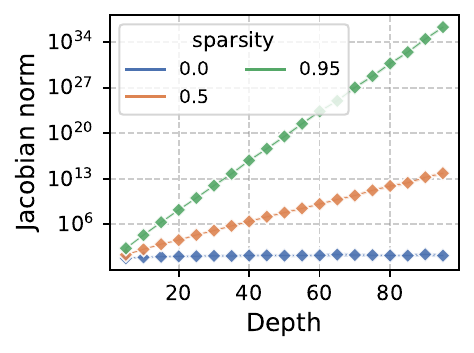}\\
            \includegraphics[width=0.245\textwidth]{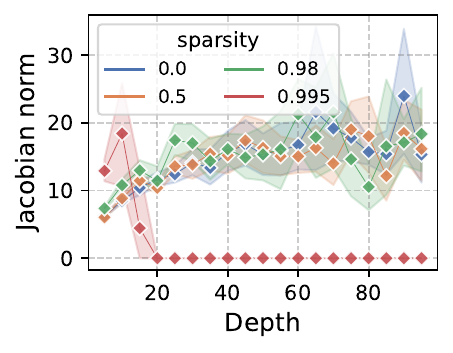}
            \includegraphics[width=0.249\textwidth]{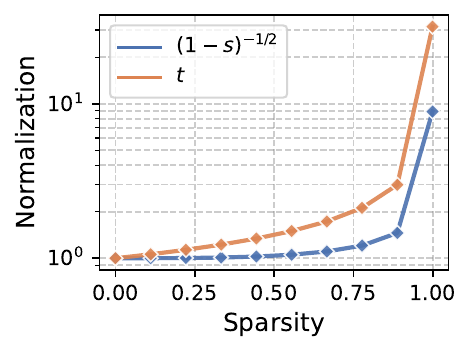}
        \end{center}
        \caption{\small{Jacobian norm after pruning at initialization as depth increases in a \emph{magnitude-based} pruned MLP of width $n = 256$ (input sampled randomly from MNIST). \textbf{(Upper Left)} Without scaling (scaling factor = 1). \textbf{(Upper Right)} scaling factor $= (1 - s_{n})^{-\frac{1}{2}}$. \textbf{(Lower Left)} Scaling factor is calculated based on Theorem~\ref{thm:magnitude-based}. \textbf{(Lower Right)} Comparison of the scaling factors as a function of sparsity level.}}
        \label{fig:jacnorm_depth_mag_prunning}
    \end{figure}

    In \cref{fig:jacnorm_depth_mag_prunning}, we demonstrate the effect of the scaling factor on the evolution of the Jacobian norm with depth. The top two plots show that without scaling (scaling factor = 1), the Jacobian norm is not stable and remains so when using the same scaling factor as in the case of random pruning ($(1 - s_{n})^{-\frac{1}{2}}$). The stability is recovered with the correct scaling factor stated in \cref{thm:magnitude-based} which is shown in the third plot. The bottom plot compares the  two scaling factors, used in the second and third plots respectively, in function of the sparsity. The takeaway is that the optimal scaling factor differs depending on the pruning criteria. The optimal scale for random pruning does not guarantee stability for magnitude-based pruning. It is primordial to determine the correct scaling factor that would guarantee the stability with different pruning criteria. Note that our results on sparse networks confirm previous results on weight scaling in the case of sensitivity-based pruning~\cite{hayou2021robust}.\footnote{It is worth mentioning that the stability measure used in~\cite{hayou2021robust} is based on the second norm of the gradient, which is a weaker measure than the Jacobian spectral norm.} We refer the reader to \cref{app:further_empirical_results} for results on the performance on trained sparse networks with and without scaling.

    In the next section, we show a second application of our generic stability result (\cref{th:jacobian-universality}). We study the stability of the Jacobian when the weights are not necessarily independent.

    \section{Jacobian with Dependent Weights}\label{app:further_theoretical_results}
    
    The reader might ask what happens to the Jacobian norm when the weights are \emph{not} independent, or equivalently: \emph{what level of correlation may be introduced between the weights without jeopardizing Jacobian stability?}

    The next theorem demonstrates that weight matrices~$\W$ with correlated entries can still yield Jacobian stability, provided that the correlation does not surpass $o(n^{-3/2}/ \log(n))$ asymptotically. In this case, the Jacobian spectral norm has the same fixed-depth large-width limit as in the iid Gaussian case.

    \begin{theorem}[Stability with Dependent Weights]\label{thm:criticality_non_i.i.d}
        Assume that the weights $W_1, W_2, \dots, W_L$ are independent copies of $n\times n$ weight matrix~$W$ consisting of centered Gaussian entries with variance~$2/n$ and correlation $\textup{corr}(W^{ij}_k, W^{m l}_k) = o(n^{-3/2}/\log(n))$ if $(i,j) \neq (m,l)$. Then, 
        $$
      \frac{\|J_1\|}{\|J_1^{iid}\|}
\xrightarrow[n\to\infty]{\mathbb P}
1.
        $$
    Consequently, the correlated model inherits the same fixed-depth large-width Jacobian stability as the iid model.    
\end{theorem}
    \begin{proof}
Since $\var(W_{ij})=\frac2n$, the correlation assumption is equivalent to
$$
\max_{(i,j)\neq(m,k)}\bigl|\operatorname{Cov}(W_{ij},W_{mk})
\bigr|=o\!\left(
\frac{1}{n^{5/2}\log n}\right).
$$
Therefore the correlated Gaussian condition (B) of
\cref{th:jacobian-universality} is satisfied. The conclusion follows
immediately from that theorem.
    \end{proof}
    \begin{figure}
        \begin{center}
            \includegraphics[width=0.3\textwidth]{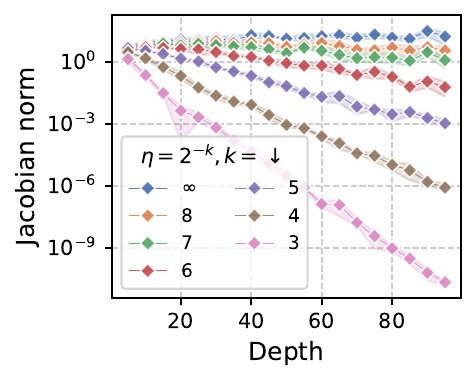}
            \includegraphics[width=0.3\textwidth]{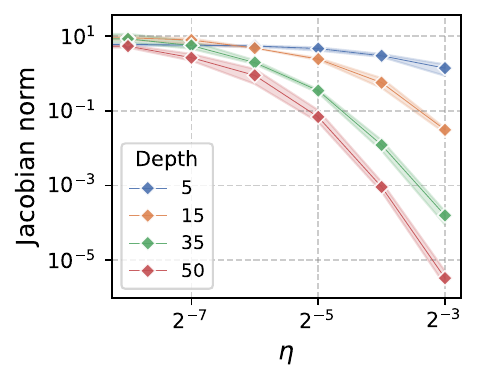}
        \end{center}
        \caption{\small{Illustration of the Jacobian norm for a randomly selected input with an MLP architecture of width $n = 256$ and varying depths. All results are averaged over 3 runs, and confidence intervals are highlighted with shaded areas. \textbf{(Left)} Impact of Depth on the Jacobian norm for different correlation levels. \textbf{(Right)} Impact of the injection of the correlation between the weights on the Jacobian norm.}}
        \label{fig:jacnorm_init_noni.i.d}
    \end{figure}

    It is reasonable to anticipate that when the correlation between weights is small enough, the Jacobian norm will display behavior similar to that observed in the i.i.d case. However, quantifying the level of correlation required for such similarity to hold is not straightforward.
    \cref{thm:criticality_non_i.i.d} provides a width-dependent  bound on the correlation. Provided that the correlation is significantly smaller than $n^{-1}$ up to a logarithmic term, the Jacobian norm has the same fixed-depth large-width limit as the iid case. In \cref{fig:jacnorm_init_noni.i.d}, we demonstrate the impact of the correlation between weights on the Jacobian norm as depth grows. In our simulations, the weights are generated as $W_k^{ij} = W_{k, ind}^{ij} + \eta w^k$, where $w^k, W_{k, ind}^{ij} \sim \mathcal{N}(0, 2/n)$, and~$\eta$ is held constant. In this setup, we have $\corr(W_k^{ij}, W_k^{ml}) = \frac{\eta^2}{1 + \eta^2}\approx \eta^2$ when~$\eta$ is small. For $n=256 = 2^8$, the condition specified in \cref{thm:criticality_non_i.i.d} translates to $\eta \ll 2^{-4}$. As seen in the figure, for $\eta \in \{2^{-8}, 2^{-7}\}$, the Jacobian norm closely matches the i.i.d case (represented by the blue curve), particularly when $L \leq 40$. As depth increases, one would expect that the difference between the correlated and the i.i.d Jacobian norms to become more pronounced. This is due to the fact that, given a fixed depth, the result is valid in the infinite-width limit\footnote{Note that Theorem~\ref{thm:criticality_i.i.d} and the results in this paper are fixed-depth results and do not address the joint limit where depth and width diverge simultaneously.}.

    \section{Further empirical results}\label{app:further_empirical_results}

    In this section, we conduct several experiments to verify some theoretical results. All the networks are trained with SGD (Stochastic Gradient Descent), and learning rate is tuned using a grid search in the set $\{1\text{e-}1, 1\text{e-}2, 1\text{e-}3, 1\text{e-}4\}$.
    \subsection{Empirical verification of \cref{approx:i.i.d_bernoulli}}
    We conduct a statistical independence test and additional visualization methods in order to verify this approximation.\\
    \begin{figure}
        \centering
        \includegraphics[width=0.5\linewidth]{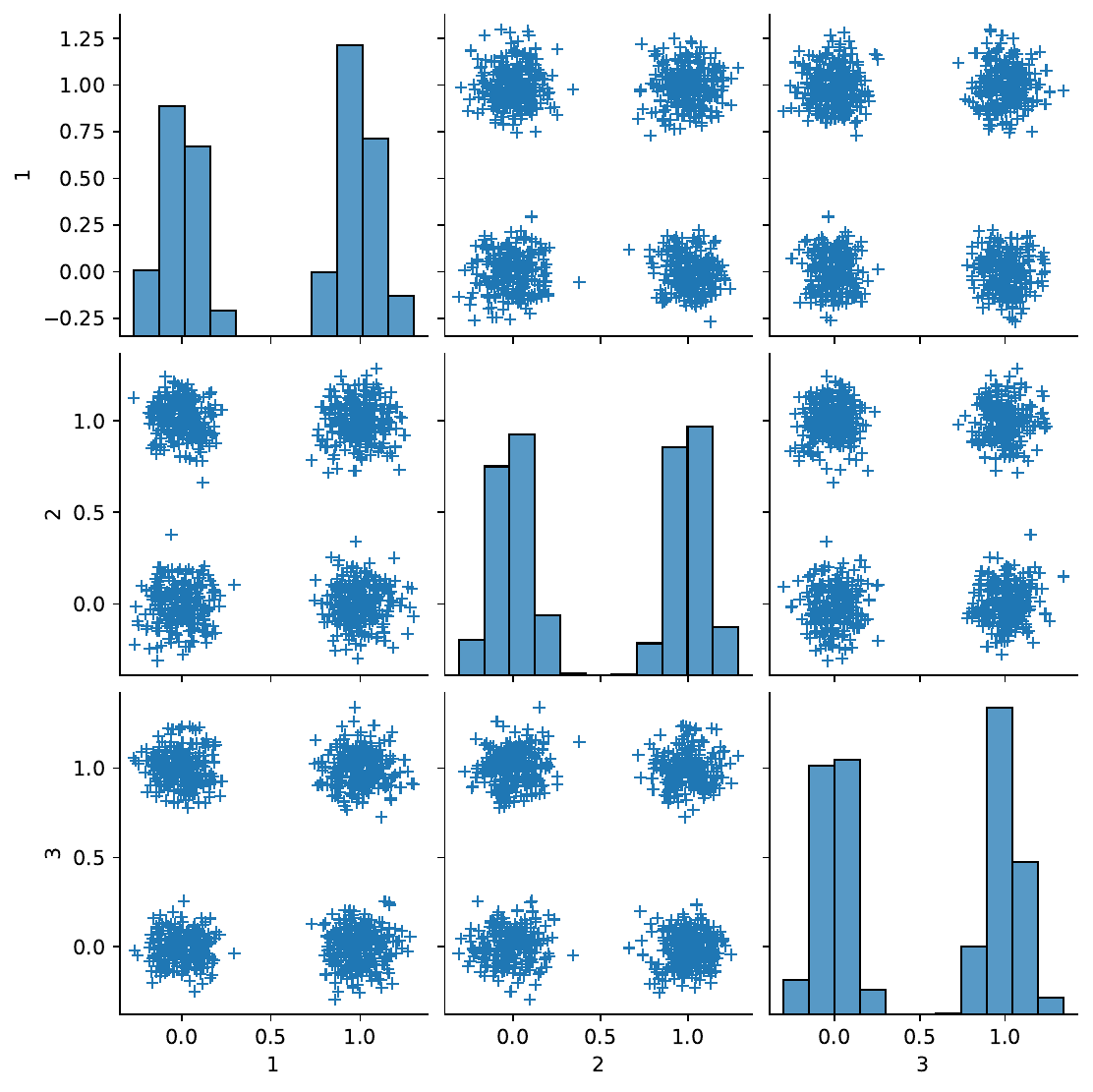}
        \caption{Joint distributions of three randomly selected entries of~$D_l$ (denoted by~$1$, $2$, and~$3$) for $l=10$ in a depth $L=30$ and width $n=100$ MLP with a randomly selected input, based on $N=1000$ simulations. Since the values of the entries are binary ($0$ or~$1$) we added random Gaussian noise (variance~$0.01$) to the points for better visibility. }
        \label{fig:ind_iid_ber}
    \end{figure}
    \emph{Diagonal entries of~$D_l$:} In \cref{fig:ind_iid_ber}, we show the joint distributions of three randomly selected entries of~$D_l$ for $l=10$ in a depth $L=30$ and width $n=100$ (MLP network with ReLU activation function). Since the values of the entries are binary ($0$ or~$1$) we added random Gaussian noise (variance~$0.01$) to the points for better visibility. The two main observations are the absence of correlation between the values of the entries, and that each entry has approximately probability~$1/2$ of being equal to~$1$, thus confirming the validity of \cref{approx:i.i.d_bernoulli} in the iid weights case. \\

    \textit{Chi-squared independence test. } We further run a chi-squared independence test between two randomly selected entries of~$D_l$ and we obtained the following results: $\chi^2(1) = 0.1379$, and $\textup{p-value} = 0.73$, and thus the ``$H_0$'' hypothesis (independent random variables) cannot be rejected. Another observation is that the p-value seems to have a uniform distribution as we change the random seed, which further supports the independence hypothesis.\footnote{It is well known that the distribution of the p-value under the~$H_0$ hypothesis is uniform in~$[0,1]$}

    \textit{Independence of~$W_l$ and~$D_l$.} There is no standard method to evaluate the independence between a discrete and a continuous random variable. We use the following heuristic to evaluate dependence between the matrices~$W_l$ and~$B_l$: we compute the statistics $T_W = \frac{1}{n^2} \sum_{1\leq i,j \leq n} 1_{W_l^{ij} > 0}$ and $T_D = \frac{1}{n} \sum_{1\leq i \leq n} 1_{D_l^{i} > 0}$ and study the correlation between them.

    \begin{figure}[h]
        \centering
        \includegraphics[width=0.5\linewidth]{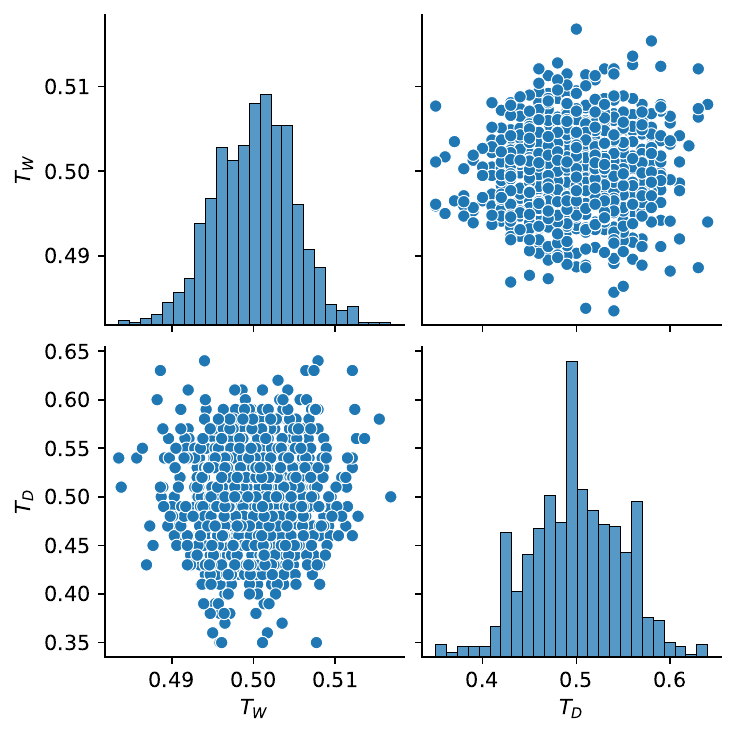}
        \caption{Joint distribution of $(T_W, T_D)$ for~$D_l$ and~$W_l$ ($l=10$) in a $L=30$ and $n=100$ MLP with a randomly selected input, based on $N=1000$ simulations.}
        \label{fig:ind_iid_WvsD}
    \end{figure}
    \cref{fig:ind_iid_WvsD} shows the joint distribution of $(T_W, T_D)$ for~$D_l$ and~$W_l$ ($l=10$) in a $L=30$ and $n=100$ MLP. No clear correlation can be observed from the histograms. This supports the approximation of independence between~$W$ and~$D$. In the case of dependent weights, \cref{fig:ind_dependent_ber} shows the joint distributions of three randomly selected diagonal entries of $D$ with the only difference being that the weights are now correlated. With weak correlation (left figure), it appears that \cref{approx:i.i.d_bernoulli} is still a good approximation. However, with high correlation (right figure), it is clear that the diagonal entries are not dependent and the approximation is not valid in this case. The statistic become correlated when we increase the correlation level, confirming the validity of this simple heuristic.

\begin{figure}[h]
    \centering
    \includegraphics[width=0.45\linewidth]{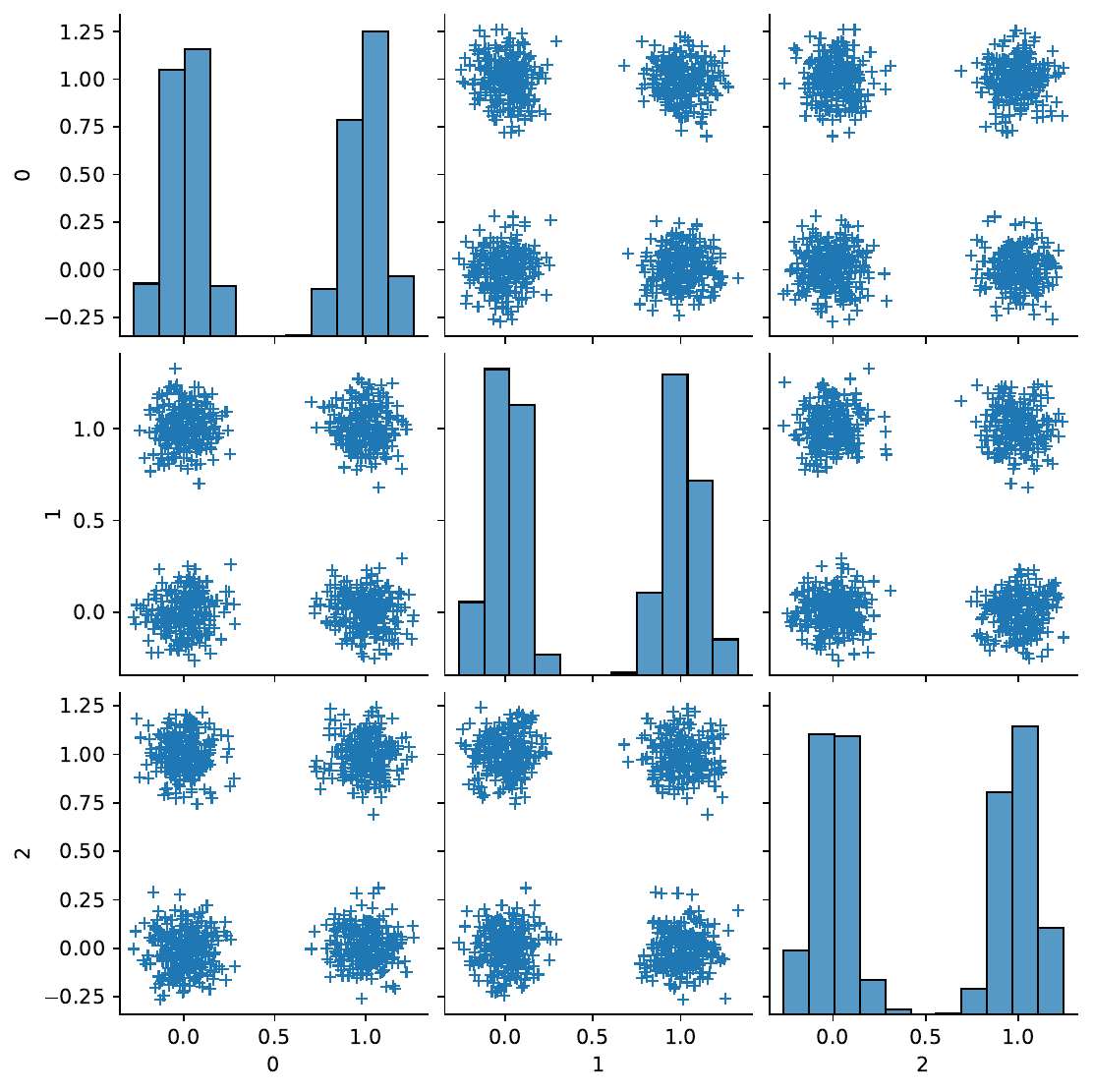}
\includegraphics[width=0.45\linewidth]{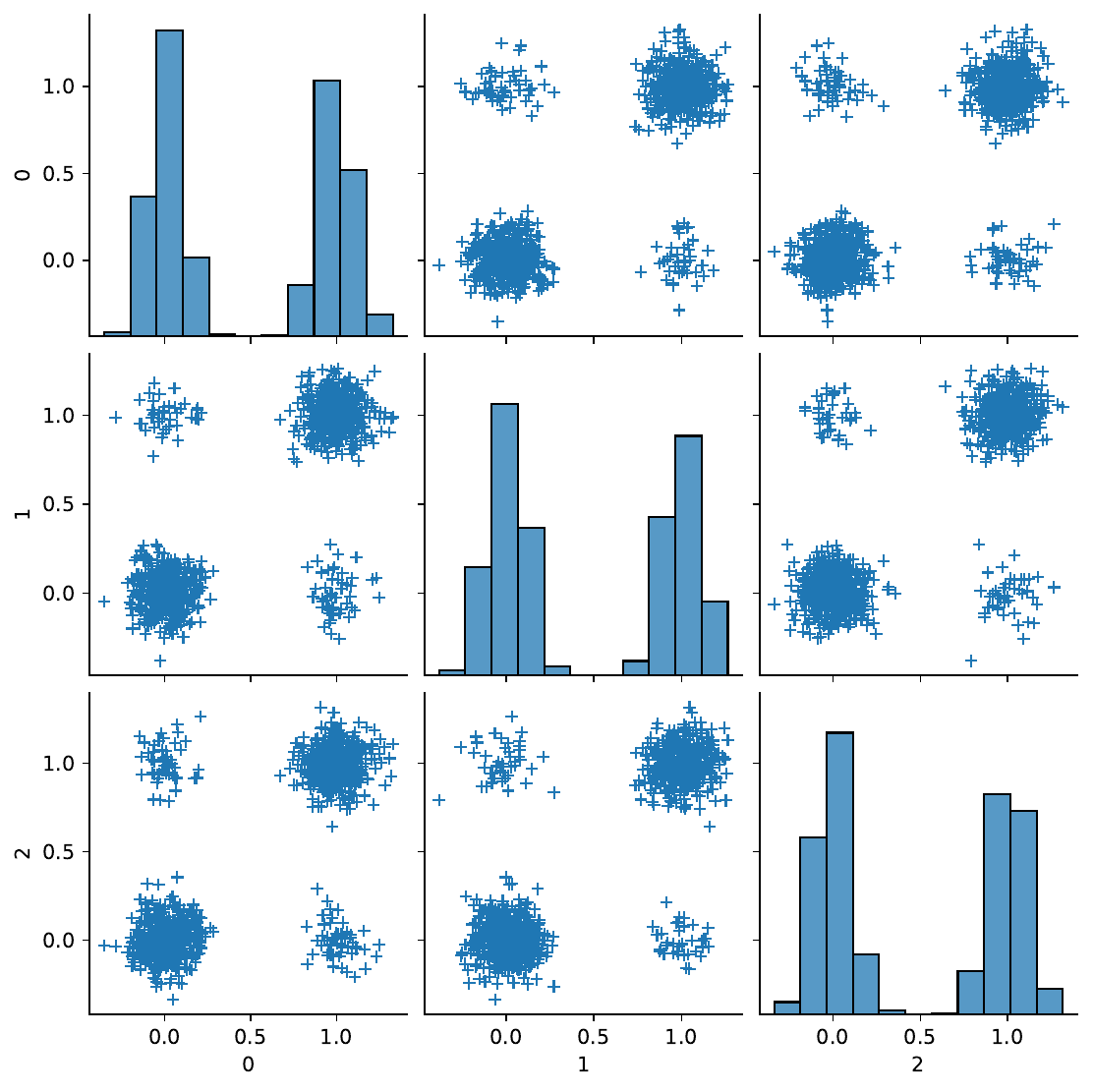}
    \caption{Same as \cref{fig:ind_iid_ber} with dependent weights given by $\Tilde{W}^{ij}_l = W^{ij}_l + \eta w$ as in \cref{fig:jacnorm_init_noni.i.d}. \textbf{(Left)} $\eta = 0.1$. \textbf{(Right)} $\eta = 0.9$. }
    \label{fig:ind_dependent_ber}
\end{figure}
\paragraph{Diagonal entries of $D_l$.}

    \subsection{Impact of scaling in pruned networks}
    In \cref{fig:val_accuracy_pruned_random} and \cref{fig:val_loss_pruned_random}, we report the test accuracy/loss of an MLP of width $n=256$ of varying depths trained on Fashion-MNIST. Incorporating the scaling factor significantly improves the trainability of the network after pruning. The edge of stability can also be observed in terms of trainability as predicted in \cref{thm:fixed_sparsity}.
    \begin{figure}[h]
        \centering
        \includegraphics[width=.3\textwidth]{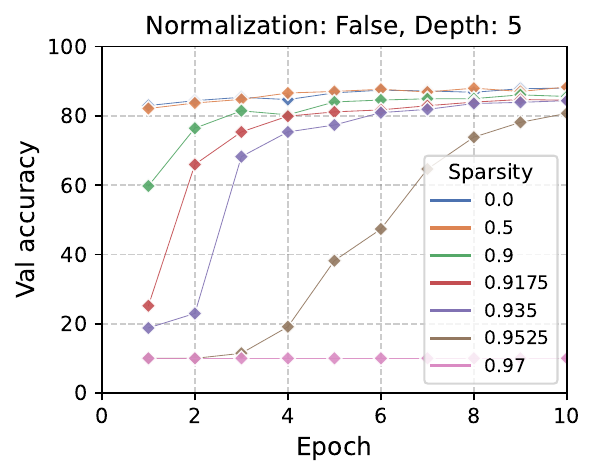}
        \includegraphics[width=.3\textwidth]{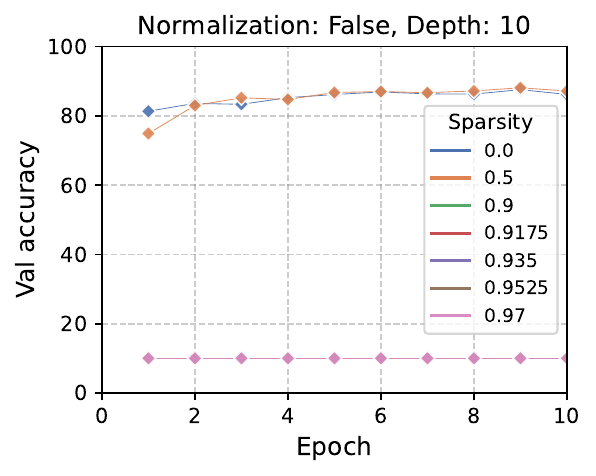}
        \includegraphics[width=.3\textwidth]{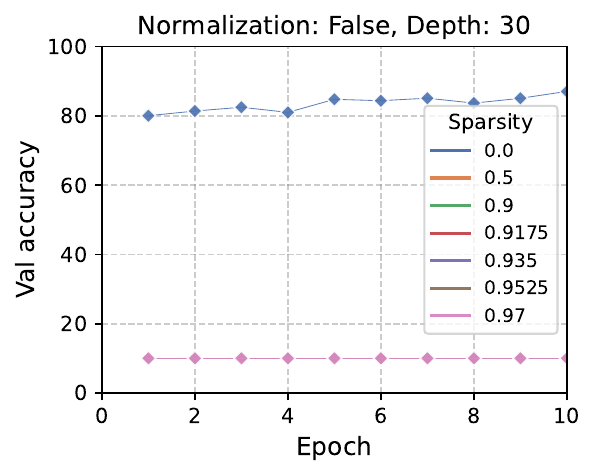}
        \includegraphics[width=.3\textwidth]{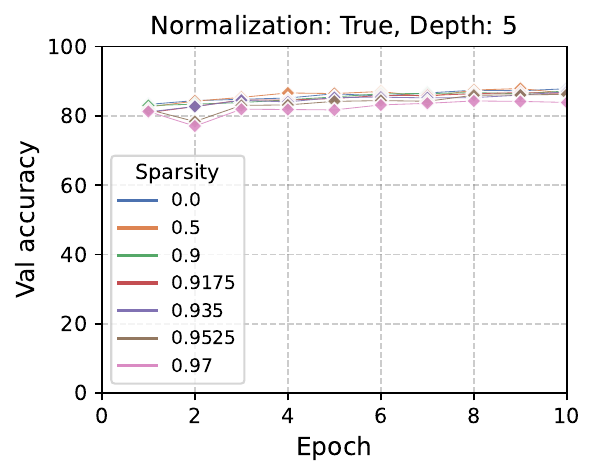}
        \includegraphics[width=.3\textwidth]{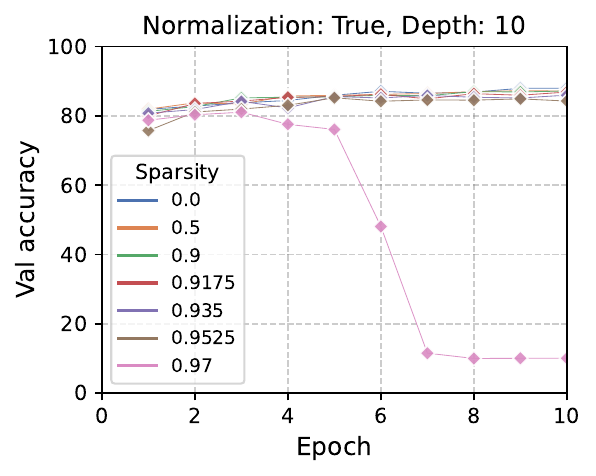}
        \includegraphics[width=.3\textwidth]{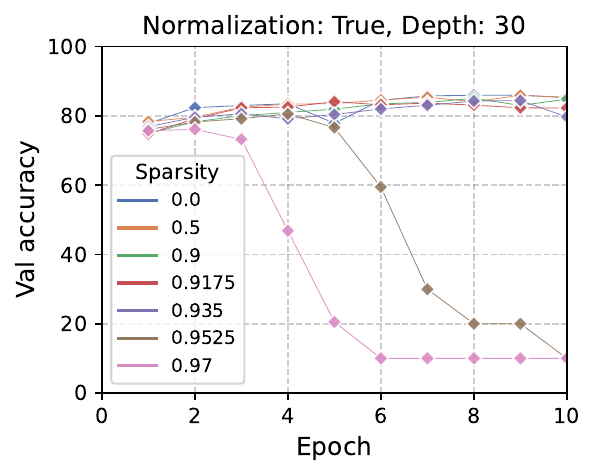}
        \caption{Test (Validation) accuracy of a width $n=256$ \emph{randomly pruned} MLP of varying depths trained on Fashion-MNIST with learning rate $\gamma = 0.01$. Top figures show the results without scaling (we refer to this by normalization) while the bottom figures show the results with scaling.}
        \label{fig:val_accuracy_pruned_random}
    \end{figure}

    \begin{figure}[h]
        \centering
        \includegraphics[width=.3\textwidth]{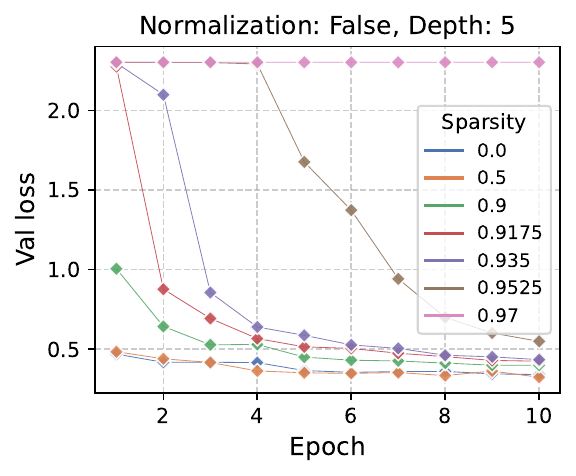}
        \includegraphics[width=.3\textwidth]{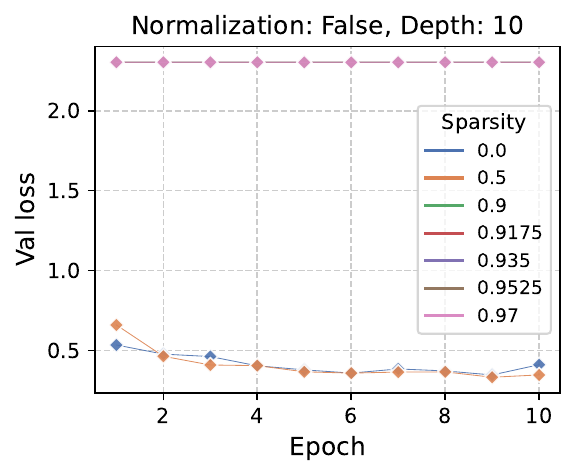}
        \includegraphics[width=.3\textwidth]{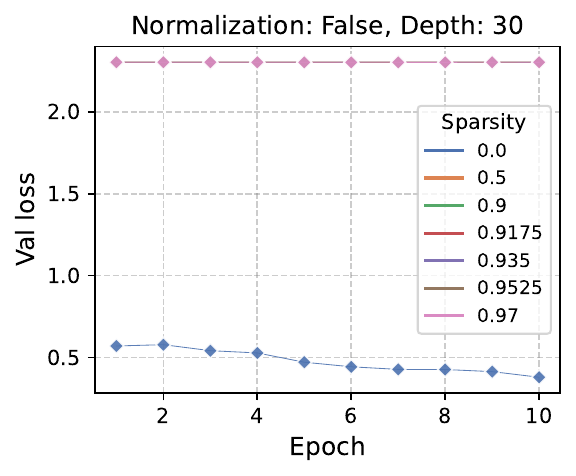}
        \includegraphics[width=.3\textwidth]{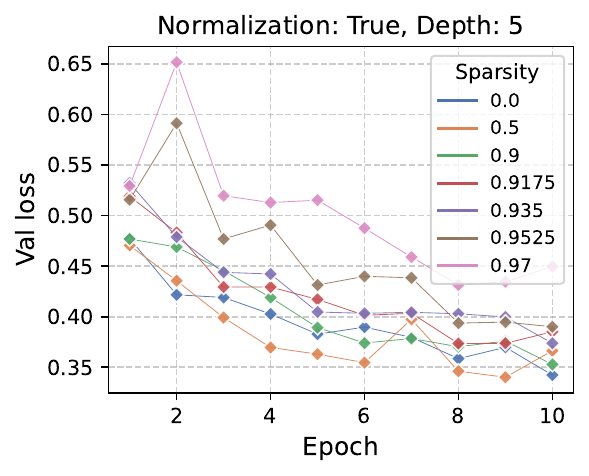}
        \includegraphics[width=.3\textwidth]{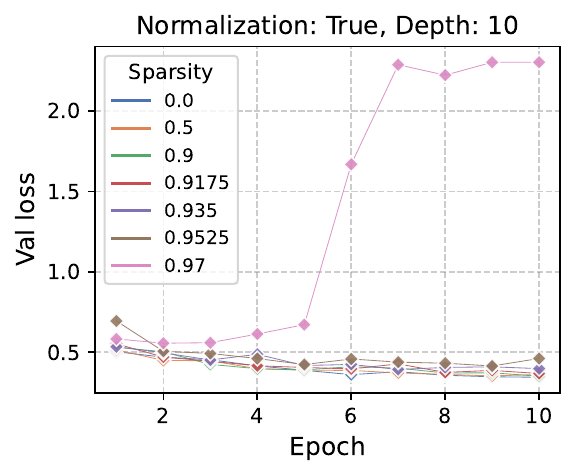}
        \includegraphics[width=.3\textwidth]{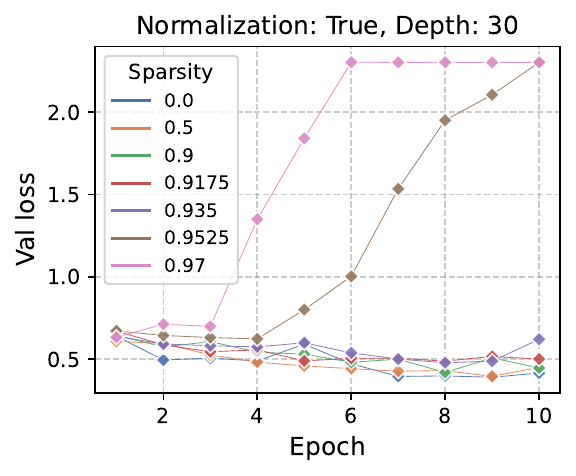}
        \caption{Test loss of a width $n=256$ \emph{randomly pruned} MLP of varying depths trained on Fashion-MNIST with learning rate $\gamma=0.01$. Top figures show the results without normalization while the bottom figures show the results with normalization.}
        \label{fig:val_loss_pruned_random}
    \end{figure}

    \subsection{Impact of correlated weights}
    In \cref{fig:test_error_iid_dependent}, we depict the test error after convergence in two settings: IID initialized MLP trained on MNIST, and Dependent Weights Initialized MLP trained on Fashion-MNIST. The results support our theoretical findings.
    \begin{figure}[h]
        \centering
        \includegraphics[width=0.3\linewidth]{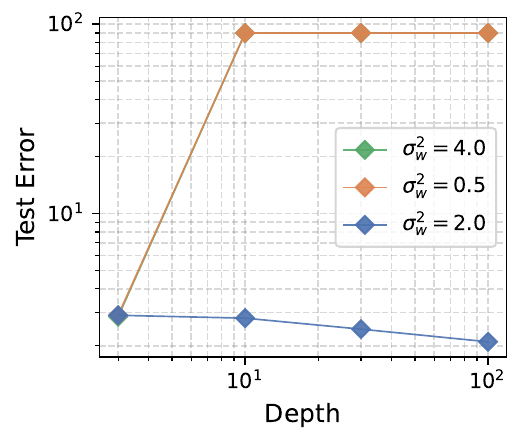}
        \includegraphics[width=0.32\linewidth]{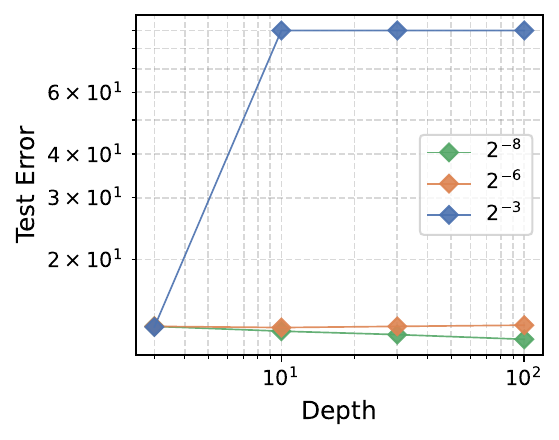}
        \caption{\textbf{(Left)} Test error after convergence ($70$ epochs) as a function of depth in an MLP with width $n=256$ trained with SGD with learning rate $\gamma = 0.001$ on MNIST. The results are shown for 3 different choices of~$\sigma_w$, the variance of the iid weights. The critical initialization given by $\sigma_w^2=2$ guarantees trainability up to depth 100. Note that the orange and green curves coincide, hence the green curve is not visible. \textbf{(Right)} Test error after convergence ($70$ epochs) as a function of depth in an MLP with width $n=256$ trained with SGD with learning rate $\gamma = 0.001$ on Fashion-MNIST. The results are shown for 3 different choices of the correlation parameter~$\eta$. When the correlation is low, the network remains trainable for large depths, which is not the case for high correlation levels.}
        \label{fig:test_error_iid_dependent}
    \end{figure}

    \section{Conclusion and Limitations}

    In this paper, we provided an analysis of the Jacobian norm in DNNs in two different contexts: 1) dependent weights, and 2) sparse networks. Our findings show that depth stability can be guaranteed in both cases under some conditions: 1) for dependent weights, correlation should be bounded by $o(n^{-3/2}/\log(n))$, and 2) for sparse networks, weights must be re-scaled after pruning. In this regard, our work expands the existing literature on depth stability and justifies the crucial role of initialization and weight scaling, often observed in practice. 
    
    However, one limitations of our theory is that it currently only applies to the MLP architecture. Besides, for sparse networks, we only consider random and magnitude-based pruning. Extending these results to more modern architectures and pruning algorithms is an interesting question for future work.

    \newcommand{\etalchar}[1]{$^{#1}$}
    \providecommand{\bysame}{\leavevmode\hbox to3em{\hrulefill}\thinspace}
    \renewcommand{\MR}[1]{%
        \href{https://www.ams.org/mathscinet-getitem?mr=#1}{MR\,#1}
    }
    \newcommand{\arXiv}[1]{%
        \href{https://arxiv.org/abs/#1}{arXiv:#1}
    }

\end{document}